\documentclass{article}

 \usepackage[preprint]{neurips_2026}

\usepackage[utf8]{inputenc} 
\usepackage[T1]{fontenc}    
\usepackage{hyperref}       
\usepackage{url}            
\usepackage{booktabs}       
\usepackage{amsfonts}       
\usepackage{nicefrac}       
\usepackage{microtype}      
\usepackage{xcolor}         

\usepackage{amsmath}
\usepackage{amssymb}
\usepackage{mathtools}
\usepackage{amsthm}
\usepackage{algorithm} 
\usepackage{enumitem}
\usepackage{graphicx}
\usepackage{subcaption}
\usepackage{array}
\usepackage{multirow}
\usepackage{booktabs}

\usepackage[capitalize,noabbrev]{cleveref}

\theoremstyle{plain}
\newtheorem{theorem}{Theorem}[section]
\newtheorem{proposition}[theorem]{Proposition}
\newtheorem{lemma}[theorem]{Lemma}

\theoremstyle{definition}
\newtheorem{definition}[theorem]{Definition}
\newtheorem{assumption}[theorem]{Assumption}
\theoremstyle{remark}

\title{
Exponential Approximation Rates and Parameter Efficiency of Learnable Bernstein Activations
}

\author{%
  Ibrahim Albool\thanks{Corresponding author.} \quad Malak Gamal El-Din \quad Salma Elmalaki \quad Yasser Shoukry \\
  Department of Electrical Engineering and Computer Science, University of California, Irvine \\
  \texttt{ialbool@uci.edu}
}

\begin{document}
\maketitle

 
\begin{abstract}
The choice of activation function fundamentally shapes the representational capacity and parameter efficiency of deep neural networks, yet most widely used activations lack rigorous theoretical guarantees on these properties. We provide a theoretical analysis of DeepBern-Nets (DBNs)---networks employing learnable Bernstein polynomial activations---showing that their approximation error decays with the network depth $L$ and the polynomial order $n$ with a rate of $\mathcal{O}(n^{-L})$, exponentially faster than the polynomial rate of ReLU architectures while remaining fully differentiable. We validate these predictions through $1{,}344$ experiments on large scientific datasets (HIGGS and SUSY), comparing DBNs against ReLU, Leaky ReLU, SELU, and GeLU. DBNs achieve over $70\%$ parameter reduction across the majority of architectures---reaching $99.9\%$ at scale---converge to ReLU's final loss in as few as $26\%$ of the training epochs, and attain up to $45\%$ lower final loss. These advantages hold over all tested activations, confirming that DBN's gains stem from the learnable polynomial structure rather than mere smoothness.
\end{abstract}

\section{Introduction}
\label{sec:intro}
 
The activation function is among the most consequential design choices in deep learning: it determines the class of functions a network can represent and how many parameters are needed to approximate a given target. Several works have proposed alternatives to ReLU~\cite{xu2020reluplex,hendrycks2016gaussian,li2024selu,ramachandran2017searching}, yet rigorous theoretical analysis of how the activation choice impacts representational efficiency remains scarce. In this work, we study DeepBern-Nets (DBNs)~\cite{khedr2024deepbern}, feed-forward networks where each neuron's activation is a learnable Bernstein polynomial of degree~$n$. We prove that the approximation error of a depth-$L$ DBN decays as $\mathcal{O}(n^{-L})$---exponentially faster than the polynomial rate $\mathcal{O}((W^2L^2)^{-1/d})$ of ReLU networks~\cite{yarotsky2018optimal}---while, unlike super-approximation architectures that rely on non-differentiable operators, remaining fully trainable with standard backpropagation. This stronger representation power manifests as requiring exponentially fewer parameters to achieve the same approximation quality.
 
We validate these predictions through $1{,}344$ experiments on two large datasets---\textbf{HIGGS}~\cite{baldi2014searching} (11M samples, 28 features) and \textbf{SUSY}~\cite{baldi2014searching} (5M samples, 18 features)---comparing DBNs against ReLU, Leaky ReLU, SELU, and GeLU.
Our results show that DBNs achieve over $70\%$ parameter reduction across the majority of architectures, reaching $99.9\%$ at scale---where a $2{,}226$-parameter DBN matches a $9.8$M-parameter ReLU. DBNs also exhibit faster training dynamics, reaching ReLU's final loss in as few as $24\%$ of the training epochs and attaining up to $45\%$ lower final loss, with improvements scaling with both depth and degree as predicted by our theory. An extended comparison confirms that these advantages hold over all tested smooth activations, establishing that DBN's gains stem from the learnable polynomial structure rather than mere smoothness.

\section{Preliminaries DeepBern-Nets (DBNs)}
\label{sec:prelims}


$\bullet$ \textbf{Bernstein Basis Polynomials:} A polynomial of degree $n$ can be represented in the Bernstein basis on an interval $[l, u]$ as $P_n^{[l,u]}(x) = \sum_{k=0}^n c_k b_{n,k}^{[l,u]}(x)$, where $c_k \in \mathbb{R}$ are the control coefficients. The Bernstein basis functions $b_{n,k}^{[l,u]}(x)$ are defined for $x \in [l, u]$ as:
\begin{equation}
\label{eq:bern_basis}
b_{n,k}^{[l,u]}(x) = \frac{\binom{n}{k}}{(u-l)^n}(x-l)^k (u - x)^{n - k},
\end{equation}
where $\binom{n}{k}$ is the binomial coefficient. Unlike the power basis, the Bernstein representation is intrinsic to the domain $[l, u]$, meaning the coefficients $c_k$ directly control the polynomial's geometry within this interval.

$\bullet$ \textbf{DeepBern-Nets (DBNs):}
DBNs ~\cite{khedr2024deepbern} are feed-forward neural networks where the standard activation functions are replaced by learnable Bernstein polynomials. 
For a network of depth $L$, we denote the input as $y^{(0)} = x$ and the output of the $l$-th layer as $y^{(l)}$. The propagation rule is given by:
\begin{equation}
    y^{(l)} = \sigma\left( \mathbf{W}^{(l)}y^{(l-1)} + b^{(l)}; c^{(l)} \right),
\end{equation}
where $\mathbf{W}^{(l)}$ and $b^{(l)}$ are the learnable weights and biases. The activation function $\sigma$ operates element-wise, parametrized by a set of learnable Bernstein coefficients $c^{(l)} = \{c_{k}^{(l)}\}_{k=0}^n$. Specifically, for a pre-activation scalar input $z$, the activation is defined as:
\begin{equation}
\label{eq:deepbern_activation}
\sigma(x; l, u, c^{(l)}) = \sum_{k=0}^n c_{k}^{(l)} b_{n,k}^{[l,u]}(x), \quad x \in [l, u].
\end{equation}
Here, $n$ is a hyperparameter for the polynomial degree. This formulation allows the network to learn the shape of its non-linearities alongside its weights.


$\bullet$ \textbf{Stable training of DBNs:}
Using polynomials as activation functions in deep NNs has attracted several researchers' attention in recent years~\cite{wang2022,gottemukkula2020polynomial}. A major drawback of using polynomials of arbitrary order is their unstable behavior during training due to exploding gradients--which is prominent with the increase in order~\cite{gottemukkula2020polynomial}. 
%
Luckily, and thanks to the unique properties of Bernstein polynomials, DBN does not suffer from such a limitation as captured in the proposition in the Appendix~ \ref{app:gradient}.

\section{
Superior Representation Power: DBN and the Convergence of Approximation Error
}
\label{sec:approximation_theory}

While gradient stability ensures that a network can be trained, the ultimate utility of an architecture is determined by its representation power—the efficiency with which it can approximate a target function given a fixed set of parameters. In this section, we analyze the approximation capabilities of DBNs and prove that they possess a fundamental advantage over piecewise linear models. Specifically, we establish an upper bound on the approximation error that decays exponentially with the depth of the network $L$. Unlike ReLU-based architectures, which exhibit a slower rate of error reduction relative to depth, the Bernstein basis allows for a more compact representation of complex functions. By combining these faster convergence rates with the gradient persistence established in previous sections, we demonstrate that DBNs are not only easier to optimize but are inherently more depth-efficient than their traditional counterparts.


\subsection{Network Architecture and Algebraic Structure}

To analyze the capacity of DBN, we explicitly derive the effective algebraic degree of the network. Let $\mathcal{N}: \mathbb{R}^d \to \mathbb{R}$ denote a feedforward neural network with depth $L$. We define the operation of the $l$-th layer as a composition of a linear transformation followed by a Bernstein polynomial activation.

\begin{lemma}[Effective Degree of DBNs]
\label{lemma:degree}
Consider a DBN of depth $L$, where every neuron computes a Bernstein polynomial of degree $n$. Assuming that for a given input $\mathbf{x}$, all intermediate pre-activations fall within the valid Bernstein support (e.g., via Batch Normalization), the output of the network $\mathcal{N}(x)$ is a multivariate polynomial of total degree $D$ bounded by:
\begin{equation}
    D \le n^L.
\end{equation}
\end{lemma}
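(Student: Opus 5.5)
We argue by induction on the layer index $l$. The claim to prove is that every coordinate of $y^{(l)}$, viewed as a function of the input $\mathbf{x}\in\mathbb{R}^d$, is a multivariate polynomial of total degree at most $n^l$. The assumption in Lemma~\ref{lemma:degree} that every pre-activation lies in its Bernstein support $[l,u]$ matters here. It lets us replace the activation in \eqref{eq:deepbern_activation} by the single polynomial $\sum_{k=0}^n c_k^{(l)} b_{n,k}^{[l,u]}(z)$ on the relevant region, with no piecewise or clipping effects. The identity of polynomials then holds on the set of inputs considered, which is all the statement requires. If Batch Normalization is used, we treat it at inference time as a fixed affine map, which can be absorbed into $\mathbf{W}^{(l)}$ and $b^{(l)}$.

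\textbf{Step 1: the activation is a univariate polynomial of degree at most $n$.} By \eqref{eq:bern_basis}, each basis function $b_{n,k}^{[l,u]}(z)$ is a constant times $(z-l)^k(u-z)^{n-k}$. This is a product of $n$ affine factors in $z$, so it has degree exactly $n$. A linear combination of such terms therefore has degree at most $n$; cancellations can only lower it. So $\sigma(z;c^{(l)})=\sum_{j=0}^n a_j^{(l)} z^j$ for some coefficients $a_j^{(l)}$.

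\textbf{Step 2: induction.} The base case is $l=0$, where $y^{(0)}=\mathbf{x}$ has coordinates of degree $1=n^0$. For the inductive step, suppose every coordinate of $y^{(l-1)}$ is a polynomial in $\mathbf{x}$ of total degree at most $n^{l-1}$. Each pre-activation $z_i=\sum_m W^{(l)}_{im}y^{(l-1)}_m+b^{(l)}_i$ is a linear combination of such polynomials plus a constant, so $\deg z_i\le n^{l-1}$. We then use two standard facts about multivariate polynomials:
\[
\deg(pq)\le \deg p+\deg q \quad\text{and}\quad \deg(p+q)\le\max(\deg p,\deg q).
\]
These give $\deg z_i^j\le j\,n^{l-1}$. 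Hence $\deg \sigma(z_i)\le \max_{j\le n} j\,n^{l-1}=n^l$.

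\textbf{Step 3: the output layer.} Applying the induction up to $l=L$ gives the bound for every coordinate of $y^{(L)}$. If $\mathcal{N}$ ends with a final linear readout, that map does not raise the degree, so $D\le n^L$. If the readout is instead counted as one of the $L$ activated layers, the same bound holds directly.

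The algebra is routine. The only delicate point is the domain issue handled at the start: a composition of polynomials equals a polynomial only on the region where each activation is actually in its polynomial regime. The proof should therefore state that the polynomial identity holds on the set of inputs satisfying the support assumption, and that the degree bound applies to the polynomial that agrees with $\mathcal{N}$ there.
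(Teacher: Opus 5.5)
Your proof is correct and follows the same route as the paper: induction on depth, using that a degree-$n$ activation composed with pre-activations of degree at most $n^{l-1}$ has degree at most $n^{l}$. Yours is also more careful. The paper asserts the equality $\deg(h^{(l)}) = n\cdot\deg(h^{(l-1)})$, which can fail under cancellation (e.g., equal coefficients $c_k$ make $\sigma$ constant, because the Bernstein basis sums to one), whereas you track only the upper bound and state the domain and Batch Normalization caveats explicitly.
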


The proof of Lemma~\ref{lemma:degree} is given in Appendix~\ref{app:proof_lemma_degree}.

\subsection{Approximation Bounds}

We now leverage Jackson's Inequality to bound the approximation error. To strictly isolate the benefit of depth, we analyze the capacity of the hypothesis space spanned by the network. While the exact set of polynomials representable by the network is a sub-manifold of the full polynomial space $\Pi_{n^L}$, we assume that for the target class of functions (e.g., compositional functions), the optimal approximator lies within the network's capacity.

\begin{assumption}[Sufficient Capacity]
\label{ass:capacity}
Let $f \in C(\mathbb{R}^d)$. We assume that the best polynomial approximation of $f$ of degree $n^L$, denoted $P^*_{n^L}$, lies within the hypothesis space of the network $\mathcal{N}$. Specifically, there exists a parameter configuration $\theta^*$ such that $\mathcal{N}(x; \theta^*) = P^*_{n^L}(x)$.
\end{assumption}

\begin{assumption}[Non-Choking Condition]
\label{ass:nonchoking}
To ensure the global validity of the layer-wise approximation, we assume the network width $W$ is sufficiently large to prevent information bottlenecks. Specifically, we require $W \ge d$, where $d$ is the dimension of the input manifold. As shown in~\cite{kileel2019expressive}, this guarantees that the network possesses sufficient capacity to propagate the approximated features to the next layer without rank collapse, ensuring that the total error is dominated by the polynomial degree $n$ and depth $L$.
\end{assumption}

\begin{theorem}[Exponential Approximation Rate]
\label{thm:approx_rate}
Let $f: \mathbb{R}^d \to \mathbb{R}$ be a continuous function satisfying Assumption \ref{ass:capacity}. Then, there exists a DBN $\mathcal{N}$ of depth $L$ and degree $n$ such that:
\vspace{-2mm}
\begin{equation}
    \| \mathcal{N} - f \|_\infty \le C_d \cdot \omega_f\left(\frac{1}{n^L}\right),
\end{equation}
\vspace{-1mm}
where $C_d$ is a constant that depends only on dimension $d$ and $\omega_f(x)$ is is the modulus of continuity defined in Appendix~\ref{app:mod_of_cont}.
\end{theorem}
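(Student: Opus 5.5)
The plan is to combine the degree count of Lemma~\ref{lemma:degree} with a multivariate Jackson inequality, and to use Assumption~\ref{ass:capacity} to pass from the abstract polynomial space to the network. As in Jackson's theorem, I will work on a compact domain, the cube $K=[-1,1]^d$ (or $[l,u]^d$ after an affine rescaling). On this domain the pre-activation support condition of Lemma~\ref{lemma:degree} makes sense and the sup-norm is finite, so $\|\cdot\|_\infty$ is read as $\|\cdot\|_{L^\infty(K)}$.

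\textbf{Step 1 (polynomial best approximation).} I invoke the multivariate Jackson theorem on the cube. For every $f\in C(K)$ and every $N\ge 1$ there is a polynomial $P^*_N\in\Pi_N$ of total degree at most $N$ with
\begin{equation*}
\|f-P^*_N\|_{L^\infty(K)} \le C_d\,\omega_f\!\left(\tfrac{1}{N}\right).
\end{equation*}
Here $C_d$ depends only on $d$. This follows from the univariate Jackson kernel by tensorizing, which uses degree $N$ in each variable, or by using a multivariate Jackson-type kernel for total degree. If a total-degree bound is required, one can use coordinate-degree $\lfloor N/d\rfloor$ instead; the resulting factor of $d$ in the argument of $\omega_f$ is absorbed into $C_d$ via the subadditivity $\omega_f(d t)\le d\,\omega_f(t)$. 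I take $P^*_{N}$ to be the best approximant, which exists because $\Pi_N$ is finite-dimensional. Its error is then at most that of the Jackson polynomial.

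\textbf{Step 2 (choice of degree and realization by a DBN).} Set $N=n^L$. Lemma~\ref{lemma:degree} shows that every depth-$L$, degree-$n$ DBN computes an element of $\Pi_{n^L}$, so this is the natural target space. Assumption~\ref{ass:capacity}, supported by the width condition of Assumption~\ref{ass:nonchoking}, gives parameters $\theta^*$ with $\mathcal N(\cdot;\theta^*)=P^*_{n^L}$ on $K$. Along these parameters the pre-activations stay in the Bernstein support, which is the standing hypothesis of Lemma~\ref{lemma:degree}. Then
\begin{equation*}
\|\mathcal N-f\|_\infty=\|P^*_{n^L}-f\|_\infty\le C_d\,\omega_f\!\left(\tfrac{1}{n^L}\right),
\end{equation*}
which is the claim.

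\textbf{Main obstacle.} The substantive point is Step 1: a Jackson bound whose constant depends only on $d$ and not on $f$ or $N$, stated for total degree. I would prove it by convolving with the $d$-fold product of univariate Jackson kernels $K_m(t)\propto\bigl(\sin(mt/2)/\sin(t/2)\bigr)^4$ after the substitution $x_i=\cos t_i$. This yields an algebraic polynomial of coordinate degree $O(m)$, and the error is bounded by $\sum_i\omega_f(c/m)\le d\,c'\,\omega_f(1/m)$. Taking $m\asymp N/d$ then gives the constant $C_d$.

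The other delicate point is that realizability is not proved but assumed. The exact-representation step is therefore only as strong as Assumption~\ref{ass:capacity}, and I would state explicitly that the theorem's content is the combination ``degree $n^L$'' + Jackson. Finally, the statement is written for $\mathbb R^d$, and I would note that the sup-norm must be taken over a compact domain: on all of $\mathbb R^d$ no nonconstant polynomial can approximate a bounded $f$ uniformly.
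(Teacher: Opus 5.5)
Your proposal is correct and follows the paper's proof step for step. Lemma~\ref{lemma:degree} places the network in $\Pi_{n^L}$, the multivariate Jackson theorem bounds the error of the best approximant $P^*_{n^L}$ by $C_d\,\omega_f(1/n^L)$, and Assumption~\ref{ass:capacity} makes that approximant realizable by the network. Your extra care (taking the sup-norm on a compact cube, and sketching how a tensorized Jackson kernel gives a total-degree bound with a constant depending only on $d$) supplies details the paper leaves implicit rather than changing the argument.
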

The proof is given in Appendix ~\ref{app:thrm_approximation_proof}


\begin{table}[t]
\caption{Comparison of approximation error bounds. $\text{ReLU}_{\text{std}}$: Yarotsky \cite{yarotsky2018optimal}; $\text{ReLU}_{\text{opt}}$: Shen et al. \cite{shen2022optimal}; Floor-ReLU \cite{shen2021deep}; FLES \cite{shen2021neural}. 
}
\label{tab:approx_comparison}
\begin{center}
\begin{small}
\begin{sc}
\begin{tabular}{lccc}
\toprule
Method & Width & Depth & Error Rate \\
\midrule
$\text{ReLU}_{\text{std}}$ & $\mathcal{O}(W)$ & $\mathcal{O}(L)$ & $\omega_f\left( (W^2 L^2)^{-1/d} \right)$ \\
$\text{ReLU}_{\text{opt}}$ & $\mathcal{O}(W)$ & $\mathcal{O}(L)$ & $\omega_f\left( (W^2 L^2 \ln W)^{-1/d} \right)$ \\
$\text{ReLU}_{\text{flr}}$ & $\mathcal{O}(W)$ & $\mathcal{O}(L)$ & $\omega_f\left( W^{-\sqrt{L}} \right)$ \\
FLES & $\mathcal{O}(W)$ & $3$ & $\omega_f\left( 2^{-W} \right) + \mathcal{O}(2^{-W})$ \\
\text{DBN} & $\mathcal{O}(W)$ & $\mathcal{O}(L)$ & $\omega_f\left( n^{-L} \right)$ \\
\bottomrule
\end{tabular}
\end{sc}
\end{small}
\end{center}
\vspace*{-\baselineskip}
\end{table}

To contextualize the efficiency of DBN, we compare our derived bound against foundational results for ReLU networks and recent super-approximation architectures. Table \ref{tab:approx_comparison} summarizes these rates for a continuous function $f$ with modulus of continuity $\omega_f(\cdot)$. In the table, $\text{ReLU}_{\text{std}}$ and $\text{ReLU}_{\text{opt}}$ denote standard ReLU networks analyzed by \cite{yarotsky2018optimal} and \cite{shen2022optimal}, respectively. $\text{ReLU}_{\text{flr}}$ represents Floor-ReLU networks \cite{shen2021deep}, and FLES denotes Floor-Exponential-Step networks \cite{shen2021neural}. 

Standard ReLU networks are inherently limited by a polynomial decay in error.  \cite{yarotsky2018optimal} established that for a network of width $W$ and depth $L$, the error scales as $\omega_f((W^2 L^2)^{-1/d})$. Even with the optimized constants derived by \cite{shen2022optimal}, the fundamental polynomial rate remains unchanged. This implies that doubling the depth yields only a marginal reduction in approximation error, making high-precision approximation prohibitively expensive in terms of parameter count.

To overcome this polynomial barrier, architectures such as $\text{ReLU}_{\text{flr}}$ and FLES introduce non-differentiable operators. As shown in Table \ref{tab:approx_comparison}, utilizing the floor or step functions allows these networks to achieve root-exponential or exponential convergence rates, such as $\omega_f(2^{-W})$. However, the reliance on operators that have zero gradients almost everywhere renders these architectures unsuitable for standard gradient-based optimization, creating a gap between theoretical capacity and practical trainability.


DBN bridge this gap by achieving an exponential approximation rate of $\mathcal{O}(\omega_f(n^{-L}))$ purely through depth. Mathematically, the term $n^{-L}$ decays significantly faster than the polynomial factor $L^{-2/d}$ associated with standard ReLU networks. Crucially, unlike the super-approximation methods that rely on discontinuous functions, we achieve this rate using strictly differentiable Bernstein polynomials. This ensures that the network retains the high theoretical capacity of super-approximators while remaining amenable to efficient training via backpropagation.

\section{Experimental Results}
\label{sec:experiments}

We evaluate DBN along three axes: (i)~representational capacity on low-dimensional tasks with known ground-truth structure (\S\ref{sec:exp_capacity}), (ii)~parameter efficiency relative to ReLU baselines on the HIGGS and SUSY benchmarks (\S\ref{sec:exp_compression}), and (iii)~convergence speed during training (\S\ref{sec:exp_convergence}).
Full experimental setup, hyperparameter details, and additional results are provided in Appendix~\ref{sec:exp_details}.


\subsection{Experiment 1: Representational Capacity on Analytic Tasks}
\label{sec:exp_capacity}

We begin by empirically validating the expressivity advantage predicted by Theorem~\ref{thm:approx_rate} on a suite of low-dimensional tasks where the ground-truth decision boundary or target function is known analytically. In each experiment, we compare a DBN against a ReLU network of \emph{identical architecture} (same width $W$ and depth $L$), isolating the effect of the activation function.

\paragraph{Classification.}
We consider three 2D binary classification problems of increasing geometric complexity: \textbf{XOR~Blobs}, where four Gaussian clusters are arranged in a checkerboard pattern; \textbf{Two~Moons}, where two interleaving crescent-shaped distributions must be separated; and \textbf{Sinusoidal~Bands}, where the decision boundary follows a sinusoidal curve.

Figure~\ref{fig:activations_classif} presents the learned decision boundaries. The contrast is most striking on the XOR~Blobs task: a single DBN neuron ($W{=}1$, $n{=}3$) achieves perfect classification by learning a parabolic activation (Figure~\ref{fig:activations_classif}a), which partitions the input space into two disjoint regions along the diagonal---precisely the structure the XOR pattern demands. A single ReLU neuron, being monotonic, can only produce a linear decision boundary and is fundamentally unable to separate the four clusters, reaching only $76.5\%$ accuracy.
On Two~Moons ($W{=}2$, $n{=}3$), DBN produces a smooth, curved boundary that tightly envelops the crescent shapes ($100\%$ accuracy), whereas the ReLU baseline with identical width is limited to piecewise-linear separators ($90.8\%$).
The Sinusoidal~Bands task ($W{=}2$, $n{=}5$) further illustrates this advantage: DBN traces the sinusoidal decision boundary and achieves $96.0\%$ accuracy, while the ReLU network approximates it with a coarse piecewise-linear wedge ($91.8\%$).

\paragraph{Regression.}
To probe representational power beyond classification, we fit two analytic target functions: a \textbf{Damped~Oscillator} $y(x) = e^{-0.25x}\cos(x)$ and a \textbf{Gaussian~Sine} $y(x) = e^{-x^2}\sin(5x)$, both of which combine smooth envelope modulation with oscillatory fine structure. Results are shown in Appendix~\ref{sec:regression}.

\begin{figure}[t]
    \centering

    \begin{subfigure}[b]{\textwidth}
        \centering
        \includegraphics[width=0.32\textwidth]{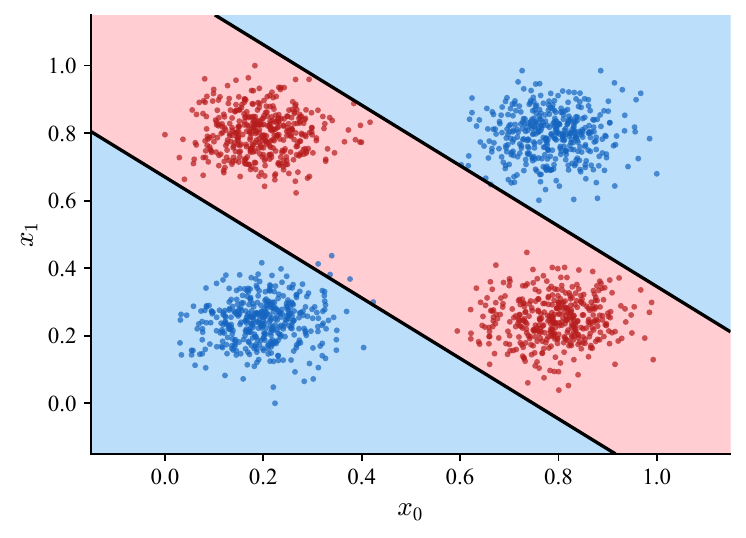}
        \hfill
        \includegraphics[width=0.32\textwidth]{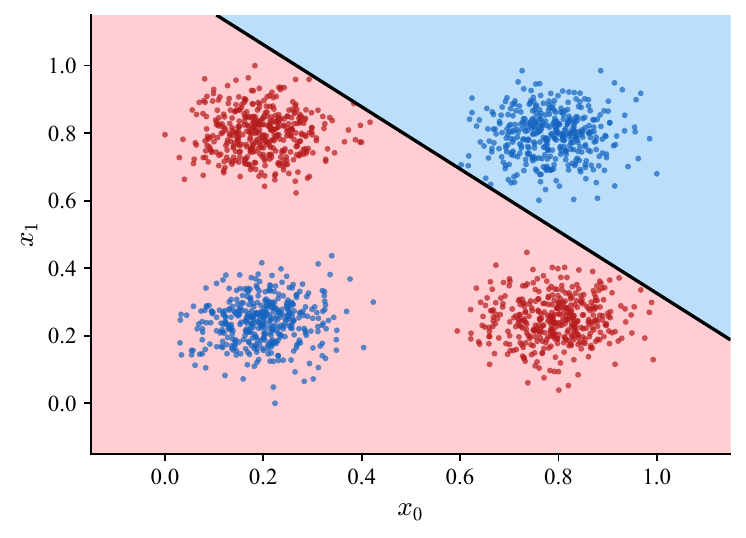}
        \hfill
        \includegraphics[width=0.32\textwidth]{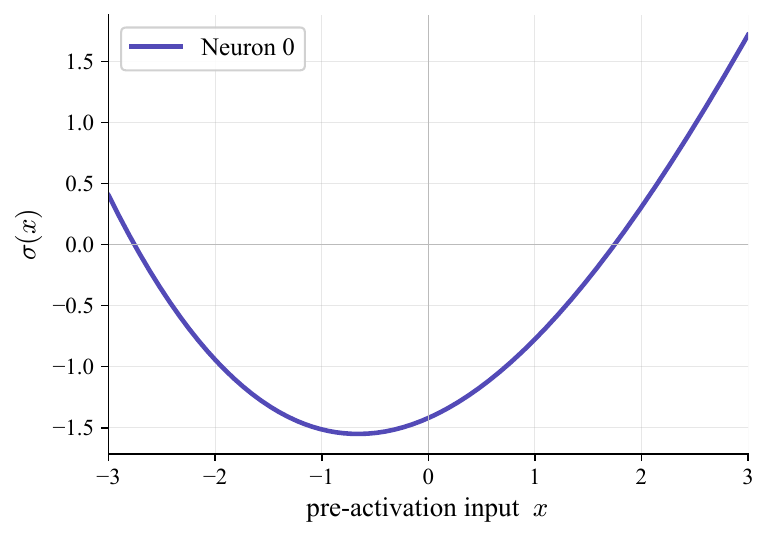}
        \caption{XOR Blobs --- DBN: $W{=}1$, $n{=}3$ (Acc.\ $100\%$); ReLU: $W{=}1$ (Acc.\ $76.5\%$). The single neuron learns a parabolic activation, enabling nonlinear separation of the XOR pattern.}
        \label{fig:row_xor}
    \end{subfigure}
    \\[4pt]

    \begin{subfigure}[b]{\textwidth}
        \centering
        \includegraphics[width=0.28\textwidth]{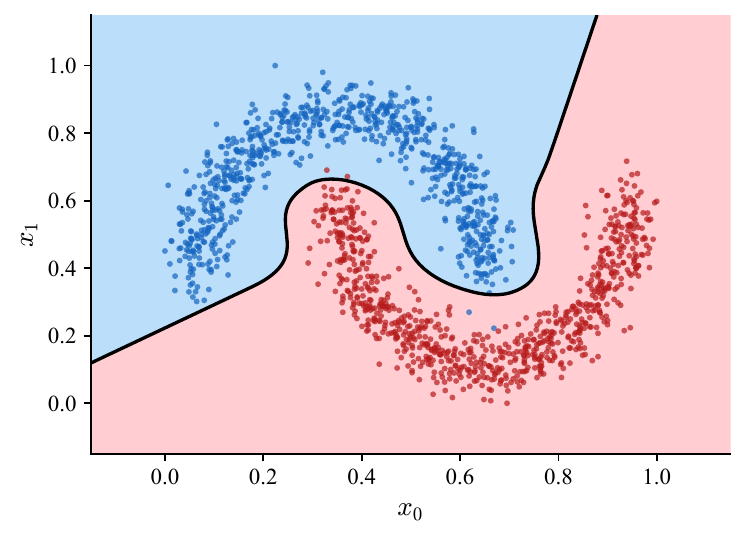}
        \hfill
        \includegraphics[width=0.28\textwidth]{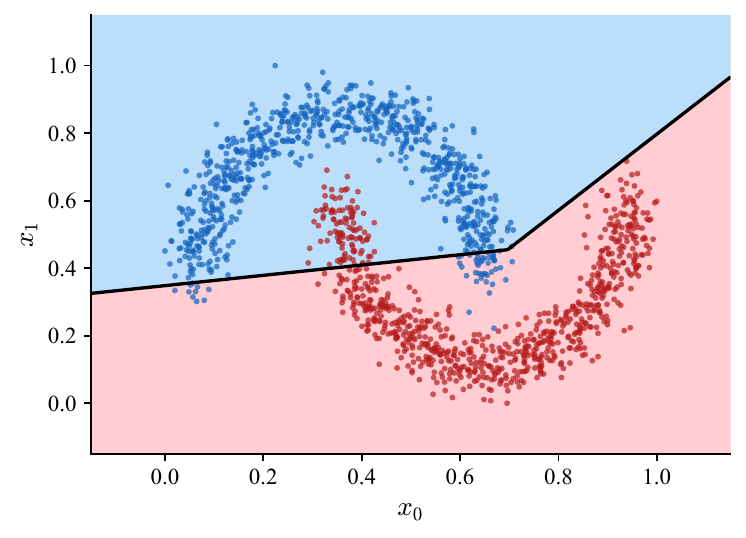}
        \hfill
        \includegraphics[width=0.28\textwidth]{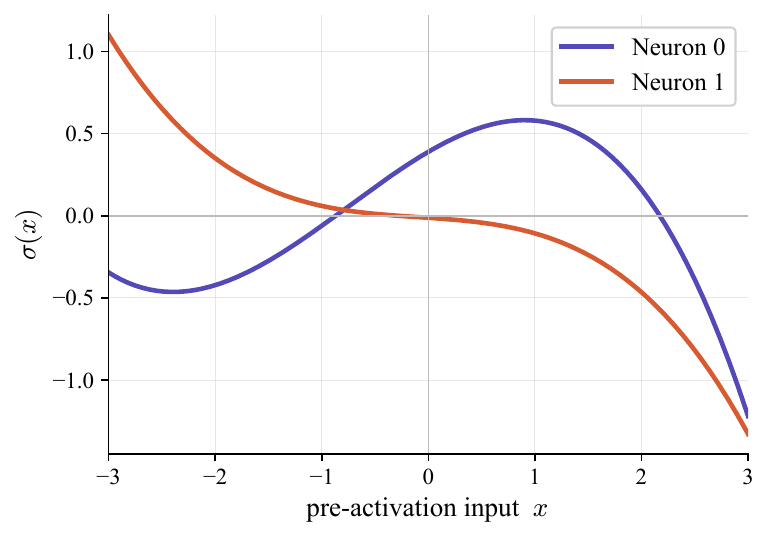}
        \caption{Two Moons --- DBN: $W{=}2$, $n{=}3$ (Acc.\ $100\%$); ReLU: $W{=}2$ (Acc.\ $90.8\%$). Two neurons develop complementary S-shaped curves that jointly carve the crescent boundary.}
        \label{fig:row_moons}
    \end{subfigure}
    \\[4pt]

    \begin{subfigure}[b]{\textwidth}
        \centering
        \includegraphics[width=0.28\textwidth]{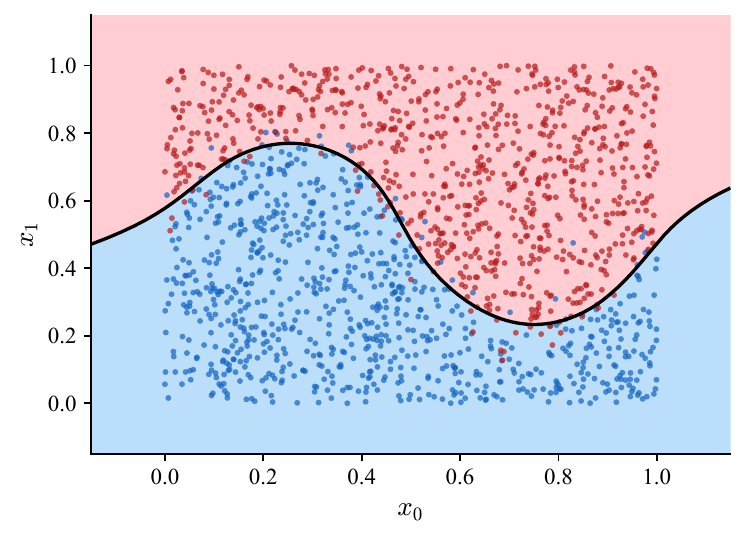}
        \hfill
        \includegraphics[width=0.28\textwidth]{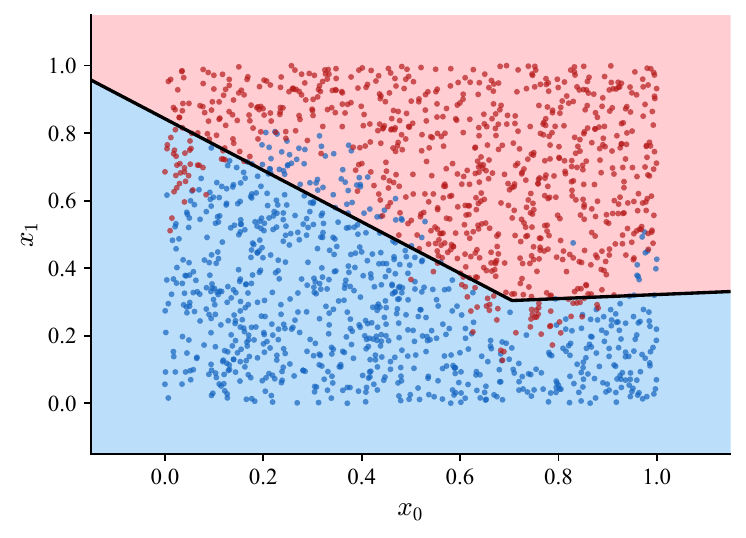}
        \hfill
        \includegraphics[width=0.28\textwidth]{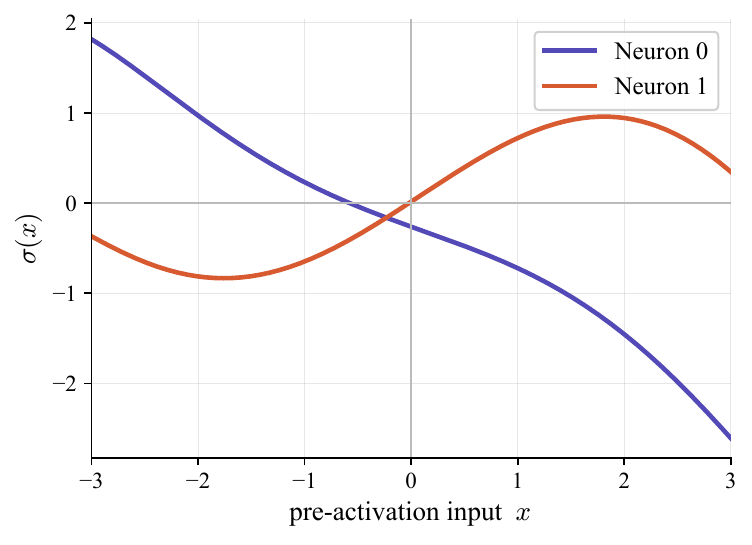}
        \caption{Sinusoidal Bands --- DBN: $W{=}2$, $n{=}5$ (Acc.\ $96.0\%$); ReLU: $W{=}2$ (Acc.\ $91.8\%$). Higher-degree activations ($n{=}5$) develop the curvature needed to trace the sinusoidal boundary.}
        \label{fig:row_sinusoidal}
    \end{subfigure}

    \caption{\textbf{Decision boundaries and learned Bernstein activations on 2D classification tasks.}
    Each row shows the DBN decision boundary (left), ReLU decision boundary of identical width $W$ (center), and the learned Bernstein activation $\sigma(x)$ of the DBN hidden neurons (right).
    DBN activations produce smooth, curved boundaries that match the ground-truth geometry, whereas ReLU networks are restricted to piecewise-linear separators.
    The advantage is most pronounced on XOR~Blobs, where a single DBN neuron learns a parabolic activation that creates two disjoint decision regions---a structure fundamentally inaccessible to a single monotonic ReLU neuron.}
    \label{fig:activations_classif}
\end{figure}

\subsection{Experiment 2: Parameter Efficiency}
\label{sec:exp_compression}

We now move from controlled low-dimensional settings to large-scale binary classification on two established particle physics benchmarks: \textbf{HIGGS}~\cite{baldi2014searching} (11M samples, 28 features) and \textbf{SUSY}~\cite{baldi2014searching} (5M samples, 18 features). Both tasks require distinguishing rare signal events from dominant background processes and are widely used as stress tests for network capacity, since the underlying decision boundaries involve complex, non-linear interactions among kinematic features.

\paragraph{Experimental protocol.}
For each dataset, we train networks across a grid of widths $W \in \{16, 32, 128, 256\}$ and depths $L \in \{5, 15, 50, 100, 150\}$ with residual connections, yielding $4 \times 5 = 20$ architectural configurations. Without residual connections, we restrict to $L \in \{5, 15\}$ ($4 \times 2 = 8$ configurations), since deeper plain networks with standard activations are known to suffer from vanishing gradients that prevent meaningful convergence. Each configuration is run with three random seeds. For ReLU baselines this produces $60$ models (residual) and $24$ models (non-residual) per dataset. For DBNs we additionally sweep degrees $n \in \{3, 9, 15, 25\}$, yielding $240$ models (residual) and $96$ models (non-residual) per dataset. In total, this amounts to $420$ trained models per dataset and $840$ experiments across both benchmarks.

\paragraph{Evaluation methodology.}
For each architecture ($L{\times}W$), we select the best-performing ReLU model across all three seeds as the baseline. We then identify all DBN models---across any architecture and degree---that achieve a strictly better metric (higher AUC or lower loss) than this baseline. Among these qualifying models, we report the one with the fewest parameters as the ``Best DBN.'' The parameter efficiency is defined as $(1 - \text{\#params}_{\text{DBN}} / \text{\#params}_{\text{ReLU}}) \times 100\%$; positive values indicate that the DBN achieves better performance with fewer parameters, while negative values indicate the best-qualifying DBN requires more parameters than the ReLU baseline (which occurs exclusively at the smallest architectures where Bernstein coefficient overhead is proportionally large).


\paragraph{Parameter efficiency scales with architecture size.}
Tables~\ref{tab:higgs_residual_auc_bucket_report} and~\ref{tab:susy_residual_auc_bucket_report} reveal a clear and consistent trend: parameter efficiency increases monotonically with both depth $L$ and width $W$. At the smallest architecture ($5{\times}16$), the efficiency is negative on both datasets ($-20.2\%$ on HIGGS, $-22.4\%$ on SUSY), reflecting the overhead of Bernstein coefficients relative to a tiny weight matrix. However, this overhead is quickly amortized: on SUSY, the efficiency already reaches $64.4\%$ at $5{\times}32$ and $97.5\%$ at $5{\times}128$. On HIGGS, the crossover to positive efficiency occurs at $5{\times}256$ ($73.2\%$).

As depth increases, the pattern strengthens dramatically. At $L{=}50$ and above, parameter efficiency exceeds $85\%$ across all widths on both datasets. On SUSY at $150{\times}256$, the efficiency reaches $\mathbf{100.0\%}$: a DBN with just $2{,}226$ parameters ($5{\times}16$, $n{=}9$) surpasses a $9.8$M-parameter ReLU network. On HIGGS, the maximum efficiency is $99.3\%$ at the same architecture. This scaling behavior is a direct consequence of Theorem~\ref{thm:approx_rate}: increasing depth $L$ causes the effective polynomial degree $n^L$ to grow exponentially, so a shallow DBN can match the representational capacity that a deep ReLU network achieves only through a large number of piecewise-linear partitions.

\paragraph{Deeper ReLU networks are most efficiently replaced.}
A striking observation is that increasing the ReLU network's depth does not proportionally improve its performance, but it does proportionally increase its parameter count---making it increasingly easy for a compact DBN to outperform it. For example, on HIGGS (Table~\ref{tab:higgs_residual_auc_bucket_report}), both $50{\times}256$ (3.2M params, AUC $0.8671$) and $150{\times}256$ (9.8M params, AUC $0.8692$) are surpassed by the same DBN: a $5{\times}128$, $n{=}3$ model with $72{,}578$ parameters. The tripling of the ReLU's parameter count yields only a marginal AUC improvement ($+0.002$), while the matching DBN remains unchanged. This suggests that deep ReLU networks allocate a large fraction of their capacity to emulating non-linear structure that Bernstein activations represent natively.

\paragraph{SUSY vs.\ HIGGS.}
The efficiency gains are consistently higher on SUSY than on HIGGS. On SUSY, efficiencies exceed $95\%$ at $L{\geq}50$ for all widths, whereas on HIGGS, comparable efficiencies require $L{\geq}100$. This is consistent with SUSY being the lower-dimensional task (18 vs.\ 28 features): the target function is smoother relative to the input space, allowing DBNs to approximate it with smaller architectures.

\paragraph{Qualifying model counts.}
The ``Total qual.'' column reveals the breadth of the DBN advantage: for most architectures, the majority of all $240$ DBN models outperform the best ReLU, and all four degree variants contribute qualifying models. Even $n{=}3$ models qualify abundantly, confirming that the advantage is not driven solely by high polynomial degree but by the fundamental expressivity of learnable polynomial activations.

The complete results for the remaining six dataset--metric--residual combinations (HIGGS and SUSY, training loss and AUC, with and without residual connections) are presented in Appendix~\ref{app:bucket_tables}. The same scaling patterns hold across all settings: negative efficiency at the smallest architectures, monotonic increase with depth and width, and efficiencies consistently above $90\%$ for deeper networks.


\begin{table*}[t]
\centering
\caption{\textbf{Parameter efficiency on HIGGS (residual connections, validation AUC).} For each ReLU architecture ($L{\times}W$), the best AUC across seeds serves as the baseline. The best DBN is the fewest-parameter model (any architecture/degree) that strictly exceeds this AUC. Efficiency increases monotonically with depth and width, from $-20.2\%$ at $5{\times}16$ to $99.3\%$ at $150{\times}256$, where a $72{,}578$-parameter DBN surpasses a $9.8$M-parameter ReLU. Negative values at the smallest architectures reflect Bernstein coefficient overhead.}
\label{tab:higgs_residual_auc_bucket_report}
\resizebox{\textwidth}{!}{%
\setlength{\tabcolsep}{5pt}
\begin{tabular}{l r r r
    >{\centering\arraybackslash}p{2.2em}
    >{\centering\arraybackslash}p{2.2em}
    >{\centering\arraybackslash}p{2.2em}
    >{\centering\arraybackslash}p{2.2em} l r r}
\toprule
\multirow{2}{*}{\shortstack{Arch\\($L\times W$)}}  & \multicolumn{2}{c}{ReLU baseline}  & \multirow{2}{*}{\shortstack{Total\\qual.}}  & \multicolumn{4}{c}{Qualifying count by DBN degree}  & \multicolumn{2}{c}{Best DBN}  & \multirow{2}{*}{\shortstack{Param\\effic.}} \\
\cmidrule(lr){2-3} \cmidrule(lr){4-7} \cmidrule(lr){8-9}
  & \# params & auc  & & $n{=}3$ & $n{=}9$ & $n{=}15$ & $n{=}25$  & \# params (arch.,~$n$) & auc  & \\
\midrule
  $5\times16$ & 1,586 & 0.8209 & 240 & 60 & 60 & 60 & 60 & 1,906~~($5\times16$,~$n{=}3$) & 0.8288 & $-20.2\%$ \\
  $5\times32$ & 5,218 & 0.8395 & 214 & 54 & 55 & 57 & 48 & 5,858~~($5\times32$,~$n{=}3$) & 0.8451 & $-12.3\%$ \\
  $5\times128$ & 70,018 & 0.8638 & 109 & 19 & 30 & 30 & 30 & 72,578~~($5\times128$,~$n{=}3$) & 0.8729 & $-3.7\%$ \\
  $5\times256$ & 271,106 & 0.8692 & 79 & 5 & 18 & 28 & 28 & 72,578~~($5\times128$,~$n{=}3$) & 0.8729 & $\mathbf{73.2\%}$ \\
\midrule
  $15\times16$ & 4,306 & 0.8217 & 240 & 60 & 60 & 60 & 60 & 1,906~~($5\times16$,~$n{=}3$) & 0.8288 & $\mathbf{55.7\%}$ \\
  $15\times32$ & 15,778 & 0.8450 & 198 & 47 & 53 & 53 & 45 & 5,858~~($5\times32$,~$n{=}3$) & 0.8451 & $\mathbf{62.9\%}$ \\
  $15\times128$ & 235,138 & 0.8658 & 99 & 9 & 30 & 30 & 30 & 72,578~~($5\times128$,~$n{=}3$) & 0.8729 & $\mathbf{69.1\%}$ \\
  $15\times256$ & 929,026 & 0.8675 & 89 & 8 & 23 & 29 & 29 & 72,578~~($5\times128$,~$n{=}3$) & 0.8729 & $\mathbf{92.2\%}$ \\
\midrule
  $50\times16$ & 13,826 & 0.8268 & 239 & 59 & 60 & 60 & 60 & 1,906~~($5\times16$,~$n{=}3$) & 0.8288 & $\mathbf{86.2\%}$ \\
  $50\times32$ & 52,738 & 0.8443 & 200 & 48 & 53 & 53 & 46 & 5,858~~($5\times32$,~$n{=}3$) & 0.8451 & $\mathbf{88.9\%}$ \\
  $50\times128$ & 813,058 & 0.8624 & 123 & 31 & 31 & 31 & 30 & 59,138~~($50\times32$,~$n{=}3$) & 0.8625 & $\mathbf{92.7\%}$ \\
  $50\times256$ & 3,231,746 & 0.8671 & 95 & 9 & 27 & 30 & 29 & 72,578~~($5\times128$,~$n{=}3$) & 0.8729 & $\mathbf{97.8\%}$ \\
\midrule
  $100\times16$ & 27,426 & 0.8266 & 239 & 59 & 60 & 60 & 60 & 1,906~~($5\times16$,~$n{=}3$) & 0.8288 & $\mathbf{93.1\%}$ \\
  $100\times32$ & 105,538 & 0.8482 & 180 & 41 & 48 & 46 & 45 & 6,818~~($5\times32$,~$n{=}9$) & 0.8496 & $\mathbf{93.5\%}$ \\
  $100\times128$ & 1,638,658 & 0.8629 & 117 & 26 & 30 & 31 & 30 & 72,578~~($5\times128$,~$n{=}3$) & 0.8729 & $\mathbf{95.6\%}$ \\
  $100\times256$ & 6,521,346 & 0.8687 & 83 & 6 & 21 & 28 & 28 & 72,578~~($5\times128$,~$n{=}3$) & 0.8729 & $\mathbf{98.9\%}$ \\
\midrule
  $150\times16$ & 41,026 & 0.8263 & 239 & 59 & 60 & 60 & 60 & 1,906~~($5\times16$,~$n{=}3$) & 0.8288 & $\mathbf{95.4\%}$ \\
  $150\times32$ & 158,338 & 0.8469 & 187 & 44 & 50 & 48 & 45 & 6,818~~($5\times32$,~$n{=}9$) & 0.8496 & $\mathbf{95.7\%}$ \\
  $150\times128$ & 2,464,258 & 0.8637 & 111 & 21 & 30 & 30 & 30 & 72,578~~($5\times128$,~$n{=}3$) & 0.8729 & $\mathbf{97.1\%}$ \\
  $150\times256$ & 9,810,946 & 0.8692 & 79 & 5 & 18 & 28 & 28 & 72,578~~($5\times128$,~$n{=}3$) & 0.8729 & $\mathbf{99.3\%}$ \\
\bottomrule
\end{tabular}%
}
\end{table*}

\begin{table*}[t]
\centering
\caption{\textbf{Parameter efficiency on SUSY (residual connections, validation AUC).} Same methodology as Table~\ref{tab:higgs_residual_auc_bucket_report}. Efficiency reaches $\mathbf{100.0\%}$ at $150{\times}256$: a DBN with $2{,}226$ parameters ($5{\times}16$, $n{=}9$) surpasses a $9.8$M-parameter ReLU network. The crossover to positive efficiency occurs already at $5{\times}32$ ($64.4\%$), and efficiencies exceed $95\%$ for all architectures with $L \geq 50$.}
\label{tab:susy_residual_auc_bucket_report}
\resizebox{\textwidth}{!}{%
\setlength{\tabcolsep}{5pt}
\begin{tabular}{l r r r
    >{\centering\arraybackslash}p{2.2em}
    >{\centering\arraybackslash}p{2.2em}
    >{\centering\arraybackslash}p{2.2em}
    >{\centering\arraybackslash}p{2.2em} l r r}
\toprule
\multirow{2}{*}{\shortstack{Arch\\($L\times W$)}}  & \multicolumn{2}{c}{ReLU baseline}  & \multirow{2}{*}{\shortstack{Total\\qual.}}  & \multicolumn{4}{c}{Qualifying count by DBN degree}  & \multicolumn{2}{c}{Best DBN}  & \multirow{2}{*}{\shortstack{Param\\effic.}} \\
\cmidrule(lr){2-3} \cmidrule(lr){4-7} \cmidrule(lr){8-9}
  & \# params & auc  & & $n{=}3$ & $n{=}9$ & $n{=}15$ & $n{=}25$  & \# params (arch.,~$n$) & auc  & \\
\midrule
  $5\times16$ & 1,426 & 0.8759 & 235 & 56 & 60 & 60 & 59 & 1,746~~($5\times16$,~$n{=}3$) & 0.8770 & $-22.4\%$ \\
  $5\times32$ & 4,898 & 0.8767 & 193 & 51 & 57 & 52 & 33 & 1,746~~($5\times16$,~$n{=}3$) & 0.8770 & $\mathbf{64.4\%}$ \\
  $5\times128$ & 68,738 & 0.8769 & 183 & 51 & 57 & 46 & 29 & 1,746~~($5\times16$,~$n{=}3$) & 0.8770 & $\mathbf{97.5\%}$ \\
  $5\times256$ & 268,546 & 0.8774 & 116 & 39 & 39 & 24 & 14 & 2,226~~($5\times16$,~$n{=}9$) & 0.8775 & $\mathbf{99.2\%}$ \\
\midrule
  $15\times16$ & 4,146 & 0.8758 & 235 & 56 & 60 & 60 & 59 & 1,746~~($5\times16$,~$n{=}3$) & 0.8770 & $\mathbf{57.9\%}$ \\
  $15\times32$ & 15,458 & 0.8770 & 173 & 47 & 55 & 44 & 27 & 1,746~~($5\times16$,~$n{=}3$) & 0.8770 & $\mathbf{88.7\%}$ \\
  $15\times128$ & 233,858 & 0.8769 & 182 & 51 & 57 & 45 & 29 & 1,746~~($5\times16$,~$n{=}3$) & 0.8770 & $\mathbf{99.3\%}$ \\
  $15\times256$ & 926,466 & 0.8771 & 154 & 44 & 52 & 37 & 21 & 2,226~~($5\times16$,~$n{=}9$) & 0.8775 & $\mathbf{99.8\%}$ \\
\midrule
  $50\times16$ & 13,666 & 0.8764 & 211 & 53 & 57 & 58 & 43 & 1,746~~($5\times16$,~$n{=}3$) & 0.8770 & $\mathbf{87.2\%}$ \\
  $50\times32$ & 52,418 & 0.8770 & 164 & 46 & 52 & 41 & 25 & 2,226~~($5\times16$,~$n{=}9$) & 0.8775 & $\mathbf{95.8\%}$ \\
  $50\times128$ & 811,778 & 0.8773 & 136 & 41 & 46 & 32 & 17 & 2,226~~($5\times16$,~$n{=}9$) & 0.8775 & $\mathbf{99.7\%}$ \\
  $50\times256$ & 3,229,186 & 0.8770 & 169 & 46 & 54 & 43 & 26 & 2,226~~($5\times16$,~$n{=}9$) & 0.8775 & $\mathbf{99.9\%}$ \\
\midrule
  $100\times16$ & 27,266 & 0.8761 & 226 & 55 & 58 & 60 & 53 & 1,746~~($5\times16$,~$n{=}3$) & 0.8770 & $\mathbf{93.6\%}$ \\
  $100\times32$ & 105,218 & 0.8764 & 214 & 53 & 57 & 59 & 45 & 1,746~~($5\times16$,~$n{=}3$) & 0.8770 & $\mathbf{98.3\%}$ \\
  $100\times128$ & 1,637,378 & 0.8771 & 159 & 45 & 52 & 39 & 23 & 2,226~~($5\times16$,~$n{=}9$) & 0.8775 & $\mathbf{99.9\%}$ \\
  $100\times256$ & 6,518,786 & 0.8778 & 59 & 22 & 22 & 9 & 6 & 5,538~~($5\times32$,~$n{=}3$) & 0.8779 & $\mathbf{99.9\%}$ \\
\midrule
  $150\times16$ & 40,866 & 0.8770 & 173 & 47 & 55 & 44 & 27 & 1,746~~($5\times16$,~$n{=}3$) & 0.8770 & $\mathbf{95.7\%}$ \\
  $150\times32$ & 158,018 & 0.8762 & 222 & 54 & 57 & 60 & 51 & 1,746~~($5\times16$,~$n{=}3$) & 0.8770 & $\mathbf{98.9\%}$ \\
  $150\times128$ & 2,462,978 & 0.8770 & 165 & 46 & 52 & 41 & 26 & 2,226~~($5\times16$,~$n{=}9$) & 0.8775 & $\mathbf{99.9\%}$ \\
  $150\times256$ & 9,808,386 & 0.8773 & 141 & 41 & 48 & 34 & 18 & 2,226~~($5\times16$,~$n{=}9$) & 0.8775 & $\mathbf{100.0\%}$ \\
\bottomrule
\end{tabular}%
}
\end{table*}

\subsection{Experiment 3: Convergence Speed}
\label{sec:exp_convergence}

We now examine the training dynamics of DBNs relative to ReLU baselines. For every architecture in our experimental grid (\S\ref{sec:exp_compression}), we compute two quantities from the per-epoch training loss curves (median over three seeds): (i)~the \emph{crossover epoch}, defined as the first epoch at which the DBN's median loss falls below the ReLU's final (best) median loss, and (ii)~the \emph{loss improvement} at the end of training, defined as $(\mathcal{L}_{\text{DBN}} - \mathcal{L}_{\text{ReLU}}) / \mathcal{L}_{\text{ReLU}} \times 100\%$, where $\mathcal{L}$ denotes the median loss at the final epoch. The total epoch count reported for each model is the median across the three seeds.

\paragraph{DBNs reach ReLU's final loss early in training.}
Tables~\ref{tab:crossover_higgs_res} and~\ref{tab:crossover_susy_res} present the crossover analysis for HIGGS and SUSY with residual connections. Across nearly all architectures and degrees, DBNs surpass ReLU's best training loss well before their own training concludes. On HIGGS (Table~\ref{tab:crossover_higgs_res}), the crossover epoch is typically $2{\times}$--$6{\times}$ earlier than the ReLU's total training duration. For example, at $150{\times}256$, ReLU trains for 55 epochs while the $n{=}25$ DBN crosses over at epoch~13---requiring only $24\%$ of ReLU's training budget to match its final performance. On SUSY (Table~\ref{tab:crossover_susy_res}), at $150{\times}256$, the $n{=}25$ DBN crosses over at epoch~8 out 
of ReLU's 40-epoch run.

\paragraph{Higher degree accelerates convergence.}
Within each architecture, higher Bernstein degrees consistently achieve earlier crossover. On HIGGS at $50{\times}128$, the crossover occurs at epoch 29 for $n{=}3$, epoch 19 for $n{=}9$, epoch 15 for $n{=}15$, and epoch 13 for $n{=}25$. This pattern is universal across both datasets and all architectures, and is consistent with the theoretical prediction: higher $n$ increases the effective polynomial degree $n^L$, enabling the network to fit the target function with fewer optimization steps.

\paragraph{Loss improvement scales with depth and degree.}
The loss improvement at the end of training reveals a strong interaction between depth and degree. On HIGGS at $L{=}5$, the improvements are modest (1.7\%--8.2\%). At $L{=}50$, they grow substantially, reaching $38.3\%$ at $50{\times}256$ with $n{=}25$. At $L{=}150$, the $n{=}25$ DBN achieves a $\mathbf{45.1\%}$ loss improvement over ReLU at $150{\times}256$. This scaling is predicted by Theorem~\ref{thm:approx_rate}: the approximation error decays as $n^{-L}$, so increasing both $n$ and $L$ yields compounding gains.


\paragraph{Failure cases.}
The few entries marked ``--'' indicate architectures where $n{=}3$ DBNs did not surpass ReLU's final loss (e.g., $100{\times}128$ on HIGGS). These occur exclusively at $n{=}3$ with large width, where the cubic polynomial lacks sufficient curvature to outperform a well-optimized deep ReLU network at equal width. Higher degrees ($n \geq 9$) cross over successfully in all cases.

The corresponding results for non-residual configurations and wall-clock timing analysis are provided in Appendix~\ref{app:convergence} and Appendix~\ref{app:timing}, respectively.

\begin{table}[t]
\centering
\setlength{\tabcolsep}{3pt}
\caption{\textbf{Convergence analysis on HIGGS (residual connections).} For each architecture, the ReLU column reports the median total training epochs across seeds. Each DBN cell shows \emph{crossover\,/\,total epochs} followed by the loss improvement at the final epoch. Crossover is the first epoch at which the DBN's median training loss (over 3 seeds) falls below ReLU's final median loss. Loss improvement $= (\mathcal{L}_{\text{DBN}} - \mathcal{L}_{\text{ReLU}}) / \mathcal{L}_{\text{ReLU}} \times 100\%$. ``--'' indicates the DBN did not surpass ReLU's final loss. Improvements scale with both depth $L$ and degree $n$, reaching $45.1\%$ at $150{\times}256$ with $n{=}25$.}
\label{tab:crossover_higgs_res}
\footnotesize{
\begin{tabular}{lccccc}
\toprule
\textbf{Arch} & \textbf{ReLU} & \multicolumn{4}{c}{\textbf{DBN}} \\
\cmidrule(lr){3-6}
($L{\times}W$) & (epochs) & $n=3$ & $n=9$ & $n=15$ & $n=25$ \\
\midrule
5$\times$16 & 72 & 26/85 ($\downarrow$2.0\%) & 14/81 ($\downarrow$3.0\%) & 12/66 ($\downarrow$3.5\%) & 14/69 ($\downarrow$3.2\%) \\
5$\times$32 & 73 & 32/93 ($\downarrow$1.7\%) & 18/95 ($\downarrow$3.0\%) & 14/88 ($\downarrow$3.4\%) & 12/81 ($\downarrow$3.9\%) \\
5$\times$128 & 68 & 37/111 ($\downarrow$3.0\%) & 20/98 ($\downarrow$6.2\%) & 16/116 ($\downarrow$8.2\%) & 12/91 ($\downarrow$7.4\%) \\
5$\times$256 & 71 & 51/73 ($\downarrow$3.2\%) & 27/48 ($\downarrow$3.4\%) & 20/36 ($\downarrow$3.2\%) & 15/43 ($\downarrow$7.1\%) \\
\midrule
15$\times$16 & 75 & 12/71 ($\downarrow$4.1\%) & 9/58 ($\downarrow$4.9\%) & 11/79 ($\downarrow$5.3\%) & 13/94 ($\downarrow$4.5\%) \\
15$\times$32 & 84 & 21/87 ($\downarrow$3.7\%) & 14/96 ($\downarrow$4.0\%) & 14/67 ($\downarrow$3.9\%) & 15/84 ($\downarrow$4.0\%) \\
15$\times$128 & 70 & 58/73 ($\downarrow$2.7\%) & 36/48 ($\downarrow$1.5\%) & 23/49 ($\downarrow$3.9\%) & 22/45 ($\downarrow$4.6\%) \\
15$\times$256 & 40 & 30/32 ($\downarrow$2.3\%) & 23/25 ($\downarrow$4.6\%) & 20/26 ($\downarrow$9.3\%) & 16/24 ($\downarrow$11.8\%) \\
\midrule
50$\times$16 & 108 & 13/86 ($\downarrow$5.6\%) & 12/98 ($\downarrow$6.9\%) & 11/91 ($\downarrow$6.0\%) & 18/90 ($\downarrow$4.1\%) \\
50$\times$32 & 80 & 14/94 ($\downarrow$5.6\%) & 11/74 ($\downarrow$5.7\%) & 10/90 ($\downarrow$6.2\%) & 15/70 ($\downarrow$4.7\%) \\
50$\times$128 & 54 & 29/36 ($\downarrow$4.4\%) & 19/33 ($\downarrow$11.0\%) & 15/29 ($\downarrow$13.7\%) & 13/26 ($\downarrow$14.0\%) \\
50$\times$256 & 37 & 25/29 ($\downarrow$9.6\%) & 18/25 ($\downarrow$20.9\%) & 14/23 ($\downarrow$30.1\%) & 12/22 ($\downarrow$38.3\%) \\
\midrule
100$\times$16 & 92 & 12/64 ($\downarrow$5.6\%) & 10/71 ($\downarrow$6.5\%) & 12/104 ($\downarrow$7.3\%) & 18/94 ($\downarrow$4.4\%) \\
100$\times$32 & 110 & 17/70 ($\downarrow$4.5\%) & 13/60 ($\downarrow$5.1\%) & 13/60 ($\downarrow$5.3\%) & 16/63 ($\downarrow$4.3\%) \\
100$\times$128 & 62 & --/31 ($\uparrow$0.8\%) & 24/29 ($\downarrow$9.2\%) & 18/26 ($\downarrow$16.5\%) & 17/26 ($\downarrow$20.3\%) \\
100$\times$256 & 54 & 26/30 ($\downarrow$11.0\%) & 20/24 ($\downarrow$21.6\%) & 16/23 ($\downarrow$30.1\%) & 14/22 ($\downarrow$40.3\%) \\
\midrule
150$\times$16 & 92 & 12/86 ($\downarrow$6.1\%) & 11/71 ($\downarrow$6.6\%) & 12/59 ($\downarrow$6.5\%) & 21/90 ($\downarrow$4.2\%) \\
150$\times$32 & 111 & 15/78 ($\downarrow$5.8\%) & 11/64 ($\downarrow$7.2\%) & 11/60 ($\downarrow$8.4\%) & 14/54 ($\downarrow$5.5\%) \\
150$\times$128 & 61 & 28/40 ($\downarrow$11.1\%) & 19/28 ($\downarrow$13.3\%) & 15/26 ($\downarrow$20.0\%) & 14/27 ($\downarrow$28.6\%) \\
150$\times$256 & 55 & 23/30 ($\downarrow$12.6\%) & 19/25 ($\downarrow$23.1\%) & 15/24 ($\downarrow$35.9\%) & 13/23 ($\downarrow$45.1\%) \\
\bottomrule
\end{tabular}
}
\end{table}

\begin{table}[t]
\centering
\setlength{\tabcolsep}{3pt}
\caption{\textbf{Convergence analysis on SUSY (residual connections).} Same methodology as Table~\ref{tab:crossover_higgs_res}. SUSY, being a lower-dimensional and easier task, shows smaller absolute improvements but the same qualitative patterns: earlier crossover at higher degrees, and loss improvement scaling with depth. The maximum improvement is $10.8\%$ at $100{\times}256$ with $n{=}25$. ``--'' entries occur only for $n{=}3$ at large widths.}
\label{tab:crossover_susy_res}
\footnotesize{
\begin{tabular}{lccccc}
\toprule
\textbf{Arch} & \textbf{ReLU} & \multicolumn{4}{c}{\textbf{DBN}} \\
\cmidrule(lr){3-6}
($L{\times}W$) & (epochs) & $n=3$ & $n=9$ & $n=15$ & $n=25$ \\
\midrule
5$\times$16 & 41 & 28/38 ($\downarrow$0.2\%) & 18/45 ($\downarrow$0.5\%) & 17/41 ($\downarrow$0.6\%) & 14/38 ($\downarrow$0.6\%) \\
5$\times$32 & 43 & 32/37 ($\downarrow$0.1\%) & 18/38 ($\downarrow$0.4\%) & 17/42 ($\downarrow$0.4\%) & 14/40 ($\downarrow$0.5\%) \\
5$\times$128 & 37 & --/39 ($\uparrow$0.2\%) & 32/39 ($\downarrow$0.4\%) & 25/27 ($\downarrow$0.1\%) & 20/29 ($\downarrow$0.5\%) \\
5$\times$256 & 30 & 32/39 ($\downarrow$0.3\%) & 21/34 ($\downarrow$1.1\%) & 17/27 ($\downarrow$0.7\%) & 14/23 ($\downarrow$0.9\%) \\
\midrule
15$\times$16 & 38 & 24/44 ($\downarrow$0.4\%) & 21/46 ($\downarrow$0.5\%) & 16/39 ($\downarrow$0.6\%) & 19/44 ($\downarrow$0.8\%) \\
15$\times$32 & 51 & 26/43 ($\downarrow$0.3\%) & 21/42 ($\downarrow$0.4\%) & 24/34 ($\downarrow$0.3\%) & 24/38 ($\downarrow$0.4\%) \\
15$\times$128 & 32 & --/33 ($\uparrow$0.1\%) & 25/31 ($\downarrow$0.3\%) & 21/27 ($\downarrow$1.1\%) & 18/24 ($\downarrow$1.6\%) \\
15$\times$256 & 35 & 29/39 ($\downarrow$0.9\%) & 20/23 ($\downarrow$0.4\%) & 16/27 ($\downarrow$4.2\%) & 14/25 ($\downarrow$6.7\%) \\
\midrule
50$\times$16 & 50 & 30/49 ($\downarrow$0.3\%) & 25/56 ($\downarrow$0.6\%) & 27/51 ($\downarrow$0.7\%) & 27/46 ($\downarrow$0.4\%) \\
50$\times$32 & 39 & 20/47 ($\downarrow$0.5\%) & 20/45 ($\downarrow$0.7\%) & 16/39 ($\downarrow$0.8\%) & 17/39 ($\downarrow$1.0\%) \\
50$\times$128 & 44 & --/30 ($\uparrow$1.1\%) & 32/34 ($\downarrow$1.3\%) & 25/27 ($\downarrow$1.9\%) & 21/23 ($\downarrow$2.1\%) \\
50$\times$256 & 44 & --/31 ($\uparrow$0.3\%) & 25/27 ($\downarrow$1.7\%) & 20/27 ($\downarrow$5.6\%) & 17/25 ($\downarrow$8.6\%) \\
\midrule
100$\times$16 & 66 & 33/37 ($\downarrow$0.1\%) & 25/49 ($\downarrow$0.3\%) & 22/46 ($\downarrow$0.7\%) & 22/49 ($\downarrow$0.8\%) \\
100$\times$32 & 59 & 31/48 ($\downarrow$0.2\%) & 29/45 ($\downarrow$0.3\%) & 25/39 ($\downarrow$0.5\%) & 25/34 ($\downarrow$0.4\%) \\
100$\times$128 & 44 & --/29 ($\uparrow$0.7\%) & 27/33 ($\downarrow$2.0\%) & 23/25 ($\downarrow$1.6\%) & 19/22 ($\downarrow$1.5\%) \\
100$\times$256 & 44 & 19/34 ($\downarrow$1.1\%) & 14/28 ($\downarrow$3.7\%) & 13/26 ($\downarrow$6.1\%) & 11/25 ($\downarrow$10.8\%) \\
\midrule
150$\times$16 & 82 & 37/47 ($\downarrow$0.3\%) & 33/48 ($\downarrow$0.5\%) & 31/46 ($\downarrow$0.4\%) & 27/44 ($\downarrow$0.5\%) \\
150$\times$32 & 50 & 20/47 ($\downarrow$0.6\%) & 18/47 ($\downarrow$1.1\%) & 16/31 ($\downarrow$0.6\%) & 17/32 ($\downarrow$0.6\%) \\
150$\times$128 & 48 & 29/33 ($\downarrow$0.2\%) & 20/34 ($\downarrow$2.5\%) & 17/24 ($\downarrow$1.3\%) & 14/26 ($\downarrow$5.3\%) \\
150$\times$256 & 40 & 10/38 ($\downarrow$2.0\%) & 9/27 ($\downarrow$3.4\%) & 8/27 ($\downarrow$7.8\%) & 8/24 ($\downarrow$9.8\%) \\
\bottomrule
\end{tabular}
}
\end{table}


\subsection{Experiment 4: Distributional Analysis Across Activation Functions}
\label{sec:exp_distributional}

To verify that the advantages observed in \S\ref{sec:exp_compression}--\S\ref{sec:exp_convergence} are not specific to ReLU, we extend the comparison to three additional smooth activations: Leaky ReLU (slope $0.1$), SELU, and GeLU. These baselines are trained on the same architecture grid and seeds, adding $504$ models ($3$ activations $\times$ $84$ configurations $\times$ $2$ datasets) to the existing $840$, for a total of $\mathbf{1{,}344}$ trained models.

Figures~\ref{fig:auc_loss_histograms} present stacked histograms of validation AUC and training loss across all models, with standard activations shown in red tones and DBN variants in blue tones. On both HIGGS and SUSY with residual connections (Figure~\ref{fig:auc_loss_histograms}, top), DBNs consistently occupy the high-AUC tail while all four standard activations---including the smooth alternatives---cluster together at lower AUC values. The separation is particularly clear on HIGGS, where AUC buckets above $0.875$ are dominated by DBNs and the region above $0.8875$ is exclusively occupied by higher-degree variants ($n \geq 9$). Notably, SELU, GeLU, and Leaky ReLU do not meaningfully extend beyond the ReLU distribution, indicating that the DBN advantage stems from the learnable polynomial structure rather than mere smoothness.

For training loss (Figure~\ref{fig:auc_loss_histograms}, bottom), the pattern is mirrored: DBNs dominate the low-loss (left) tail. On HIGGS with residual connections, losses below $0.31$ are achieved only by DBNs with $n \geq 9$, and losses below $0.22$ exclusively by $n{=}25$. The standard smooth activations remain clustered with ReLU throughout.

The non-residual distributions show the same qualitative separation and are presented in Appendix~\ref{app:distributional}.

\begin{figure}[t]
    \centering
    \includegraphics[width=0.49\textwidth]{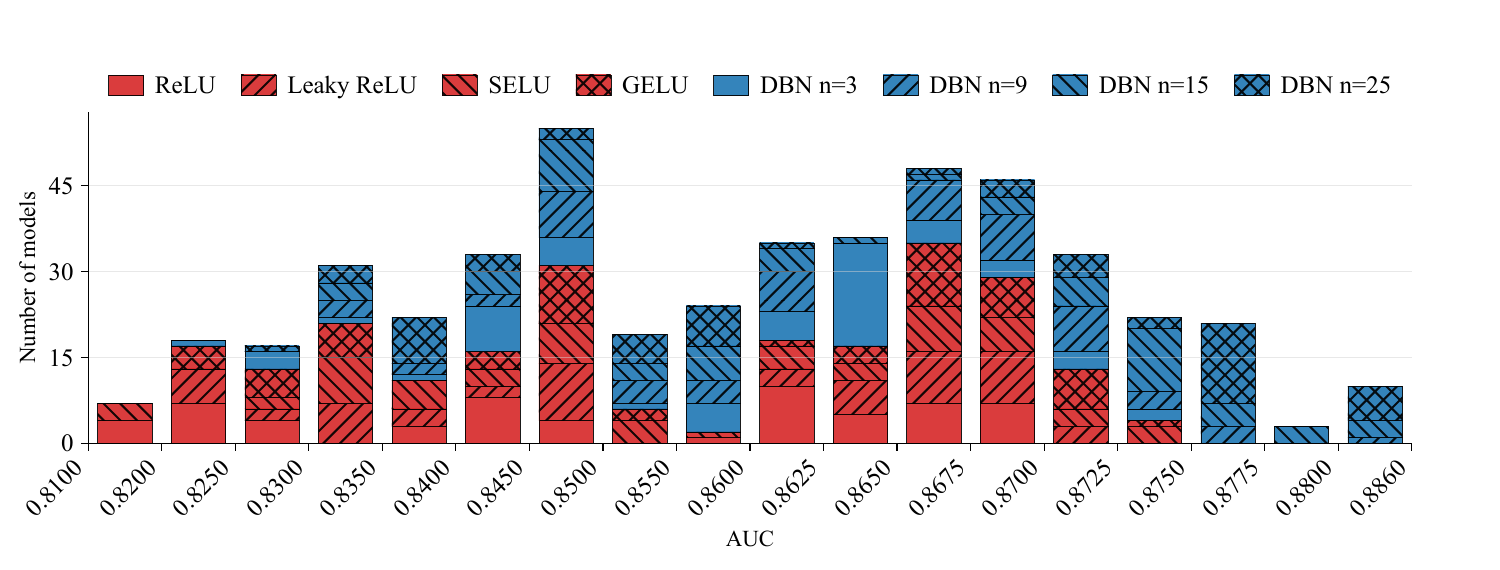}
    \hfill
    \includegraphics[width=0.49\textwidth]{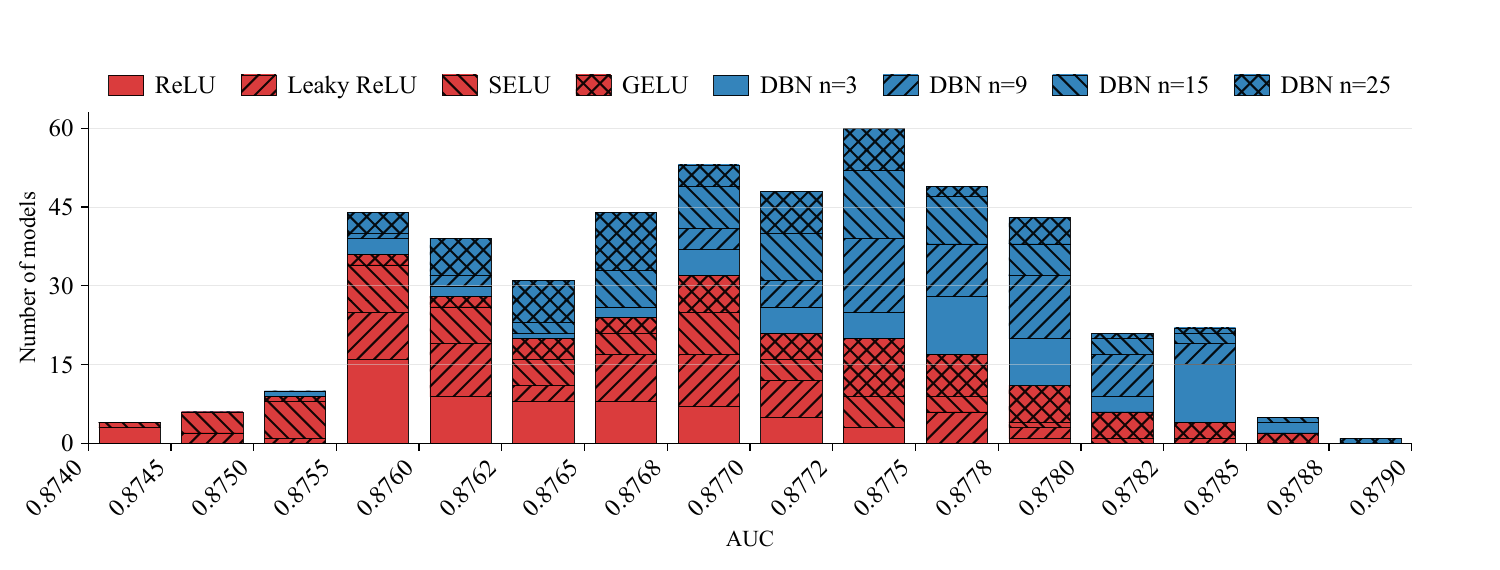}
    \\[4pt]
    \includegraphics[width=0.49\textwidth]{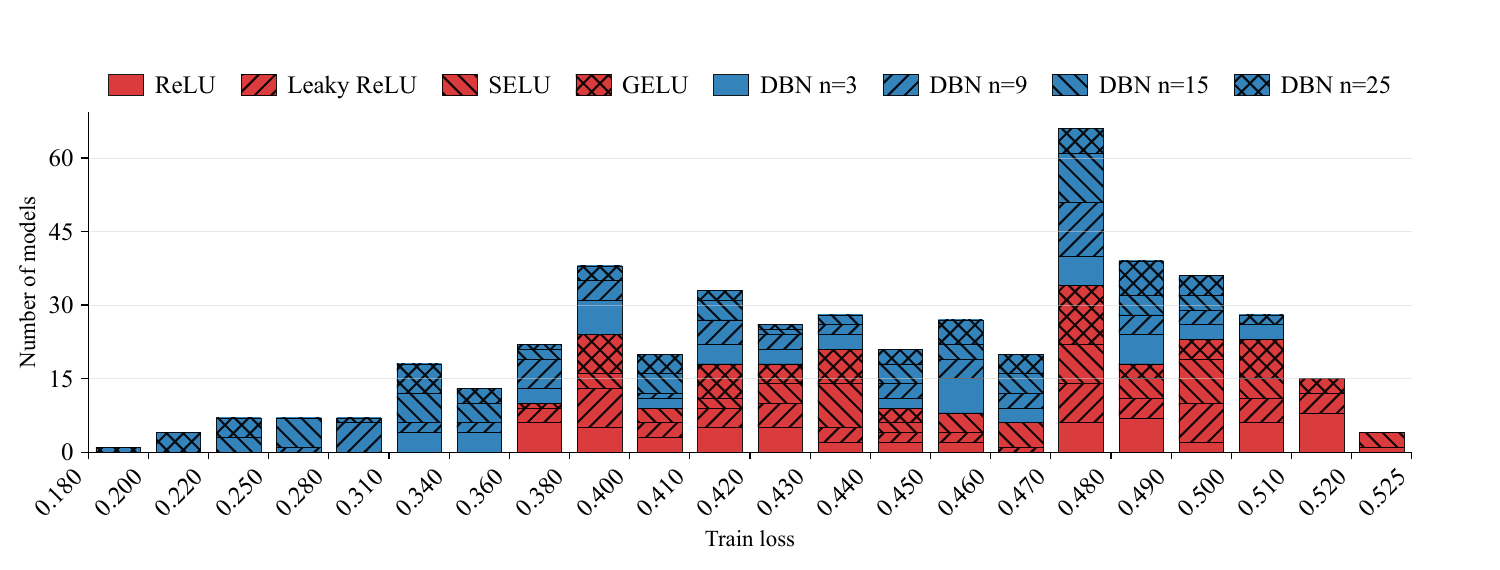}
    \hfill
    \includegraphics[width=0.49\textwidth]{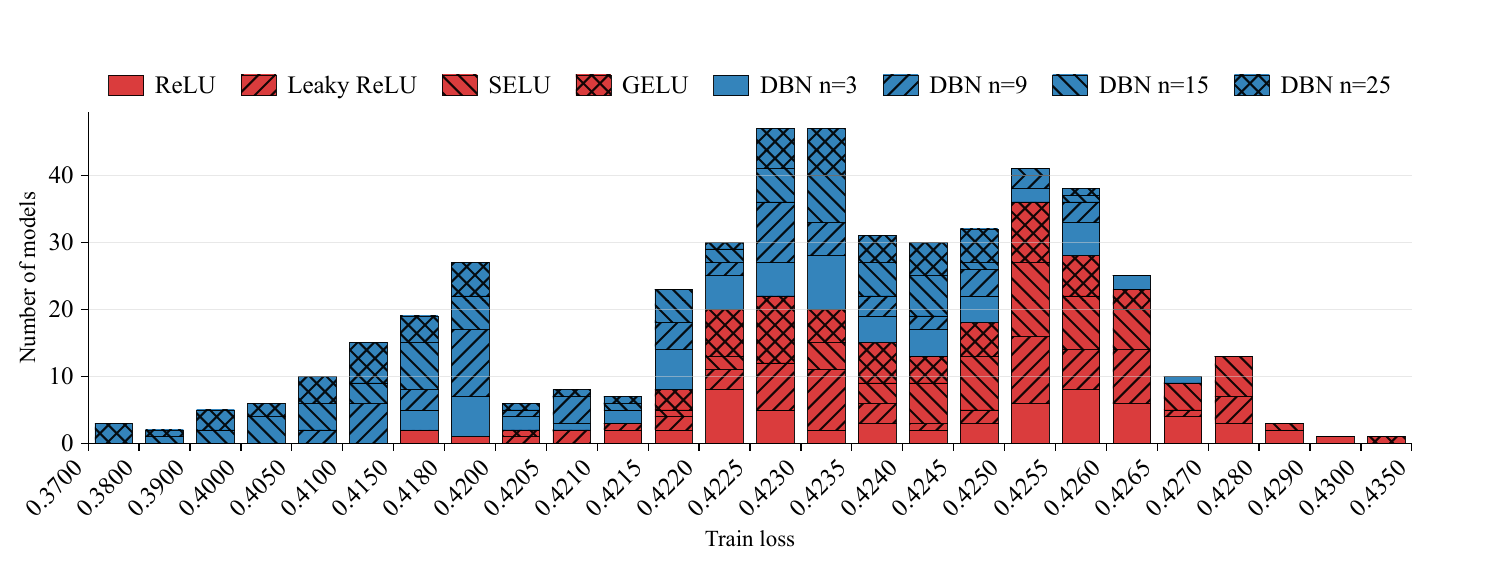}
    \caption{\textbf{Distribution of validation AUC (top) and training loss (bottom, lower is better) with residual connections, on HIGGS (left) and SUSY (right).}
    Stacked histograms over $1{,}344$ models. DBN variants (blue tones) dominate the high-AUC and low-loss tails across both datasets, while standard activations (ReLU, Leaky ReLU, SELU, GeLU; red tones) cluster in the mid-range. On HIGGS, AUC buckets above $0.8775$ and losses below $0.31$ are exclusively higher-degree DBNs ($n \geq 9$); losses below $0.22$ require $n{=}25$.}
    \label{fig:auc_loss_histograms}
\end{figure}

{
\small

\clearpage
\bibliography{bibliography}
\bibliographystyle{abbrvnat}   

}




\appendix

\section{Mathematical Preliminaries}
\subsection{Stable training of DBNs}
\begin{proposition}[\cite{khedr2024deepbern}]\label{app:gradient}Consider the Bernstein activation function $\sigma(x;l,u,\boldsymbol{c})$ of arbitrary order $n$. The following holds:
\begin{enumerate}
        \item $\left\vert \sigma'(x;l,u,c) \right\vert \le 2 n \max_{k \in \{0,\ldots,n\}} \vert c_k \vert$,
        \item $\left\vert \sigma'(x;l,u,c) \right\vert \le 1 \quad$  for all $ i \in \{0, \ldots,  n\}$.
\end{enumerate}
\end{proposition}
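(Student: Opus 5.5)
The plan is to express the derivative of a Bernstein polynomial as another Bernstein polynomial of degree $n-1$ whose coefficients are the forward differences of the $c_k$. Both items then follow from two facts: the basis functions are nonnegative on $[l,u]$, and they form a partition of unity.

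\textbf{Step 1: derivative of a basis function.} Differentiate \eqref{eq:bern_basis} directly with the product rule. Use $k\binom{n}{k} = n\binom{n-1}{k-1}$ and $(n-k)\binom{n}{k} = n\binom{n-1}{k}$. This gives, for $x\in[l,u]$,
\begin{equation*}
\frac{d}{dx} b_{n,k}^{[l,u]}(x) = \frac{n}{u-l}\Bigl(b_{n-1,k-1}^{[l,u]}(x) - b_{n-1,k}^{[l,u]}(x)\Bigr),
\end{equation*}
with the convention $b_{n-1,-1}\equiv b_{n-1,n}\equiv 0$.

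\textbf{Step 2: summation by parts.} Substitute Step 1 into \eqref{eq:deepbern_activation} and reindex. This yields the classical identity
\begin{equation*}
\sigma'(x;l,u,c) = \frac{n}{u-l}\sum_{k=0}^{n-1} (c_{k+1}-c_k)\, b_{n-1,k}^{[l,u]}(x).
\end{equation*}

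\textbf{Step 3: convexity bound.} On $[l,u]$ each $b_{n-1,k}^{[l,u]}(x)\ge 0$. By the binomial theorem applied to $\bigl((x-l)+(u-x)\bigr)^{n-1}/(u-l)^{n-1}$, these functions sum to $1$. Hence $\sigma'(x)$ is $\tfrac{n}{u-l}$ times a convex combination of the differences $c_{k+1}-c_k$, so
\begin{equation*}
|\sigma'(x)| \le \frac{n}{u-l}\max_{k}|c_{k+1}-c_k|.
\end{equation*}

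\textbf{Step 4: item 1.} The triangle inequality gives $|c_{k+1}-c_k|\le 2\max_k|c_k|$. This yields $|\sigma'(x)|\le \tfrac{2n}{u-l}\max_k|c_k|$, which is item 1 when the support is normalized so that $u-l\ge 1$. The paper's setting appears to assume this, since pre-activations are normalized, e.g.\ via Batch Normalization. I would state this normalization explicitly.

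\textbf{Step 5: item 2.} As written, item 2 carries a dangling quantifier ``for all $i$''. This signals a hypothesis on the coefficients that was lost from the statement. The natural reading, consistent with Step 3, is that $|c_{i+1}-c_i|\le \tfrac{u-l}{n}$ for all $i\in\{0,\dots,n-1\}$. Under that hypothesis, Step 3 gives $|\sigma'(x)|\le 1$ immediately.

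\textbf{Main obstacle.} The hardest part is not the computation but pinning down the hypotheses the statement leaves implicit. These are the scaling of $[l,u]$ in item 1 and the coefficient-difference condition in item 2. The calculus itself is routine once the identity in Step 2 is in hand. The one place needing care is the boundary terms when reindexing the sum, which the convention $b_{n-1,-1}= b_{n-1,n}= 0$ handles.
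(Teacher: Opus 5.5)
The paper gives no proof here; it only cites the original DeepBern-Nets work, so your proposal is judged on its own. Your Steps 1--4 are correct and standard. You are also right that item 1 as printed holds only when $u-l\ge 1$: the honest bound is $\frac{2n}{u-l}\max_k|c_k|$. Batch Normalization does not supply this condition, since $[l,u]$ is the activation's support and not something BN controls. State it as an explicit hypothesis on the interval instead.

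The gap is item 2. Taken literally, $|\sigma'(x)|\le 1$ is false: with $n=1$, $[l,u]=[0,1]$ and $c=(0,2)$ you get $\sigma(x)=2x$. So some repair is needed, but yours is almost certainly not the intended one, and it reduces item 2 to a restatement of an invented hypothesis.

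Two things point elsewhere. First, the quantifier ranges over $i\in\{0,\dots,n\}$, which is the $n+1$ coefficients $c_0,\dots,c_n$. Your condition constrains only the $n$ differences $c_{i+1}-c_i$, $i\le n-1$. Second, the proposition concerns training stability, which is about gradients with respect to the learnable parameters. The intended claim is $\bigl|\partial\sigma(x;l,u,c)/\partial c_i\bigr|\le 1$ for every $i\in\{0,\dots,n\}$.

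This takes one line. Since $\sigma$ is linear in $c$, $\partial\sigma/\partial c_i = b^{[l,u]}_{n,i}(x)$. For $x\in[l,u]$ the degree-$n$ basis functions are nonnegative and sum to $1$, so each lies in $[0,1]$.

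You already have both facts in Step 3, applied at degree $n-1$; apply them at degree $n$ to the parameter gradient. Do not impose a smallness condition on coefficient differences. DBNs never enforce such a condition, and it would not bear on gradient stability anyway.
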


Proposition \ref{app:gradient} ensures that the gradient magnitude depends linearly on the learnable coefficients $c_k$, rather than exploding with high powers of the input $x$ (as seen in monomial activations $x^n$ for $x > 1$ \cite{gottemukkula2020polynomial}). This structural upper bound guarantees that training remains stable even in deep regimes. Furthermore, it implies that the Lipschitz constant of the network can be explicitly controlled by applying standard regularization (e.g., $L_2$ weight decay) to the coefficients $c$, directly penalizing large gradients.

\subsection{Preliminaries: Modulus of Continuity}
\label{app:mod_of_cont}
The approximation rate of any function approximator is fundamentally limited by the smoothness of the target function $f$. This is typically quantified using the \textit{modulus of continuity}~\cite{devore1993constructive,dzyadyk2008theory,timan2014theory}.

\begin{definition}[Modulus of Continuity]
Let $f: \Omega \to \mathbb{R}$ be a continuous function on a domain $\Omega \subset \mathbb{R}^d$. The modulus of continuity $\omega_f(\delta)$ is defined as the maximum fluctuation of $f$ over any interval of size $\delta$:
\begin{equation}
    \omega_f(\delta) := \sup_{\substack{x, y \in \Omega \\ \|x - y\| \le \delta}} |f(x) - f(y)|.
\end{equation}
\end{definition}
Intuitively, $\omega_f(\delta)$ measures how ``wiggly'' the function is. For Lipschitz continuous functions with constant $K$, then the modulus of continuity is bounded as $\omega_f(\delta) \le K\delta$. The rate at which $\omega_f(\delta) \to 0$ as $\delta \to 0$ determines how quickly an estimator can converge to $f$.

\subsection{Proof of Effective Degree of DBNs Lemma}
\label{app:proof_lemma_degree}
\begin{proof}
We proceed by induction on the depth $L$.
\textit{Base Case ($L=1$):} The first layer computes $\sigma_n(\mathbf{W}^Tx + b)$. Since $\sigma_n$ is a polynomial of degree $n$ and the argument is linear in $x$, the output is a polynomial of degree $n = n^1$.

\textit{Inductive Step:} Assume the output of layer $l-1$, denoted $h^{(l-1)}(x)$, consists of polynomials of total degree $n^{l-1}$. The $l$-th layer computes:
\begin{equation}
    h^{(l)}(x) = \sum_{k=0}^n c_k B_{n,k}\left( \mathcal{L}(h^{(l-1)}(x)) \right),
\end{equation}
where $\mathcal{L}$ is a linear map and $B_{n,k}$ denotes the Bernstein basis $b_{n,k}^{[l,u]}(.)$. The basis function $B_{n,k}(\cdot)$ raises its input to the power $n$. Thus, the degree of the composition is the product of the degrees:
\begin{equation}
    \deg(h^{(l)}) = n \cdot \deg(h^{(l-1)}) = n \cdot n^{l-1} = n^l.
\end{equation}
By induction, the final output $\mathcal{N}(x)$ is a polynomial of degree $n^L$.
\end{proof}

\subsection{Proof of Theorem Exponential Approximation Rate}
\label{app:thrm_approximation_proof}
\begin{proof}
By Lemma \ref{lemma:degree}, the network output $\mathcal{N}(x)$ resides in $\Pi_{n^L}$, the space of polynomials of total degree $n^L$.
According to Jackson's Theorem for multivariate polynomial approximation \cite{devore1993constructive}, for any continuous function $f$, the error of the best polynomial approximation $P^* \in \Pi_{D}$ is bounded by:\vspace{-2mm}
\begin{equation}
    \inf_{P \in \Pi_D} \| P - f \|_\infty \le C_d \cdot \omega_f\left(\frac{1}{D}\right).
\end{equation}
Setting $D = n^L$, let $P^*_{n^L}$ be the best approximating polynomial in the full space $\Pi_{n^L}$. Its error satisfies:\vspace{-2mm}
\begin{equation}
    \| P^*_{n^L} - f \|_\infty \le C_d \cdot \omega_f\left(\frac{1}{n^L}\right).
\end{equation}
By Assumption \ref{ass:capacity}, the specific polynomial $P^*_{n^L}$ lies within the realizable manifold of our neural network $\mathcal{N}$. Thus, the network's optimal error coincides with the polynomial error:
\vspace{-2mm}
\begin{equation}
    \min_{\theta} \| \mathcal{N}(\cdot; \theta) - f \|_\infty \le C_d \cdot \omega\left(f, \frac{1}{n^L}\right).
    \vspace{-3mm}
\end{equation}
\end{proof}

\section{Experimental Setup}
\label{sec:exp_details}
All models were trained using the AdamW optimizer with an initial learning 
rate of $1 \times 10^{-3}$ and weight decay $1 \times 10^{-4}$. We used a 
batch size of 16,384. The learning rate was scheduled using 
\texttt{ReduceLROnPlateau} on validation AUC, with a reduction factor of 
0.5 and patience of 10 epochs, activated after epoch 5. Early stopping was 
applied with a patience of 15 epochs and a minimum AUC improvement threshold 
of $1 \times 10^{-3}$. Batch normalization was applied before each activation.

For all datasets, the data were split into 80\% train and 20\% test using a 
stratified split across multiple random seeds. Features were min-max 
normalized to $[0, 1]$ using training-set statistics only, with test 
features clipped to the same range.

All experiments were run on a single NVIDIA A30 GPU (24\,GB) with 4 CPU 
cores and 64\,GB of system memory.

\section{Additional Representational Capacity on Analytic Tasks: Regression Tasks (Experiment 1)}
\label{sec:regression}
Figure~\ref{fig:row_oscillator} shows that a single DBN neuron ($W{=}1$, $n{=}7$) fits the damped oscillator with an MSE of $2.59{\times}10^{-4}$, while a ReLU network with four times the width ($W{=}4$) achieves an MSE of $4.45{\times}10^{-3}$---a $17{\times}$ gap. The residual plot reveals that the ReLU network systematically fails to capture the oscillatory tail, whereas the DBN activation naturally adapts into a damped sinusoidal shape that mirrors the target's structure.
Similarly, on the Gaussian~Sine task (Figure~\ref{fig:row_gauss_sine}), a DBN with $W{=}2$ and $n{=}7$ achieves an MSE of $4.14{\times}10^{-3}$, outperforming the ReLU baseline ($W{=}4$, MSE~$= 5.58{\times}10^{-2}$) by over an order of magnitude.

\begin{figure}[t]
    \centering

    \begin{subfigure}[b]{\textwidth}
        \centering
        \includegraphics[width=0.32\textwidth]{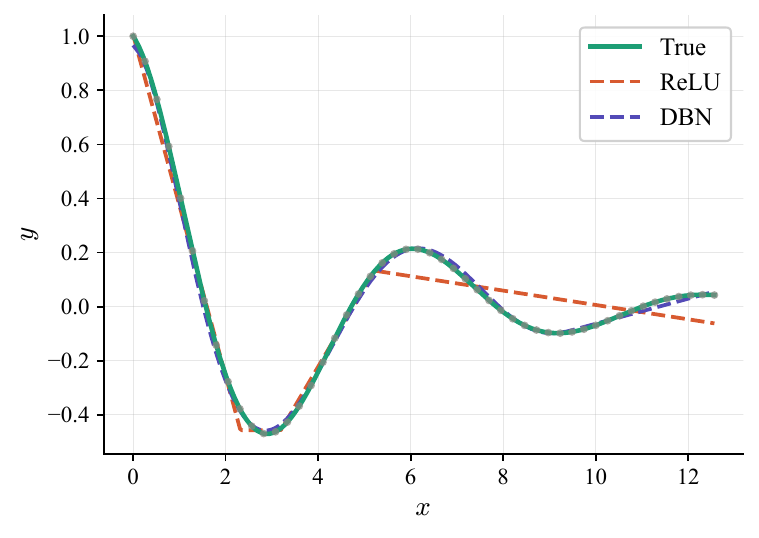}
        \hfill
        \includegraphics[width=0.32\textwidth]{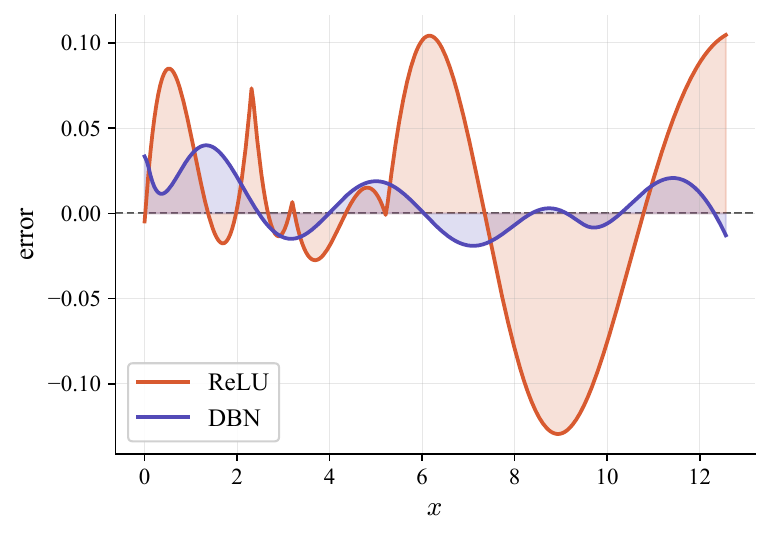}
        \hfill
        \includegraphics[width=0.32\textwidth]{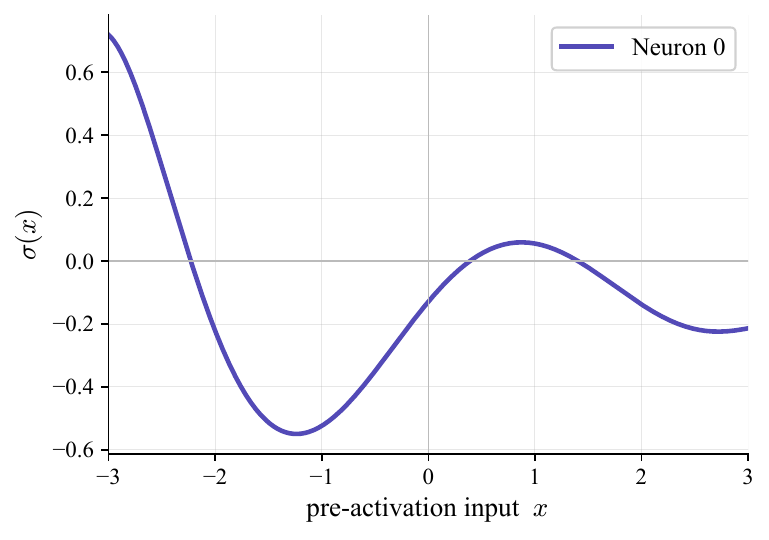}
        \caption{\textbf{Damped Oscillator} $y(x)=e^{-0.25x}\cos(x)$ --- DBN ($W{=}1$, $n{=}7$, MSE~$=2.59{\times}10^{-4}$) closely tracks the target while ReLU ($W{=}4$, MSE~$=4.45{\times}10^{-3}$) fails to capture the oscillatory tail. The learned activation adapts into a damped sinusoidal shape that mirrors the target's structure.}
        \label{fig:row_oscillator}
    \end{subfigure}
    \\[4pt]

    \begin{subfigure}[b]{\textwidth}
        \centering
        \includegraphics[width=0.32\textwidth]{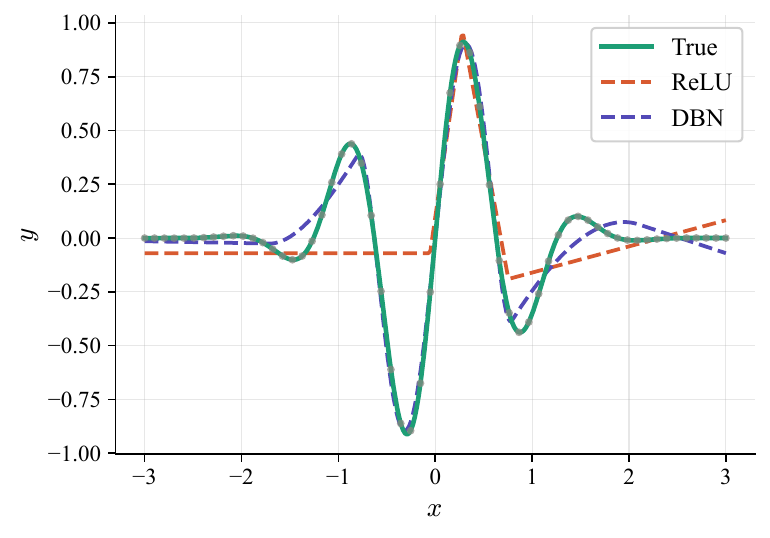}
        \hfill
        \includegraphics[width=0.32\textwidth]{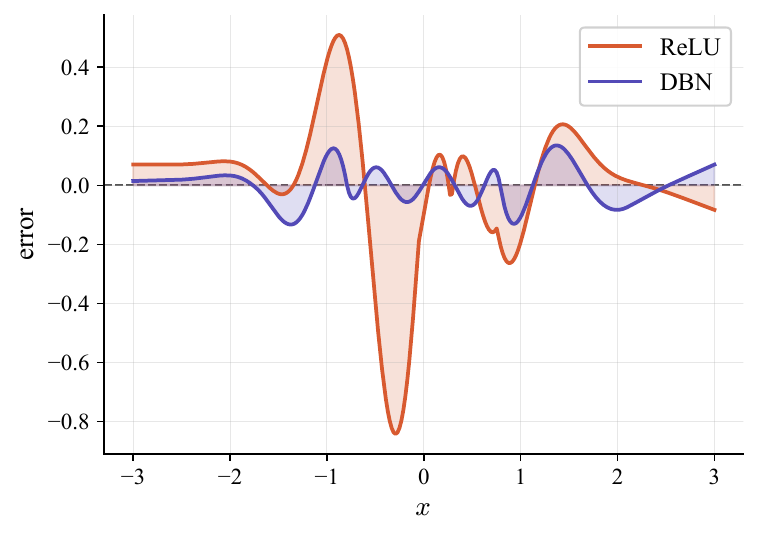}
        \hfill
        \includegraphics[width=0.32\textwidth]{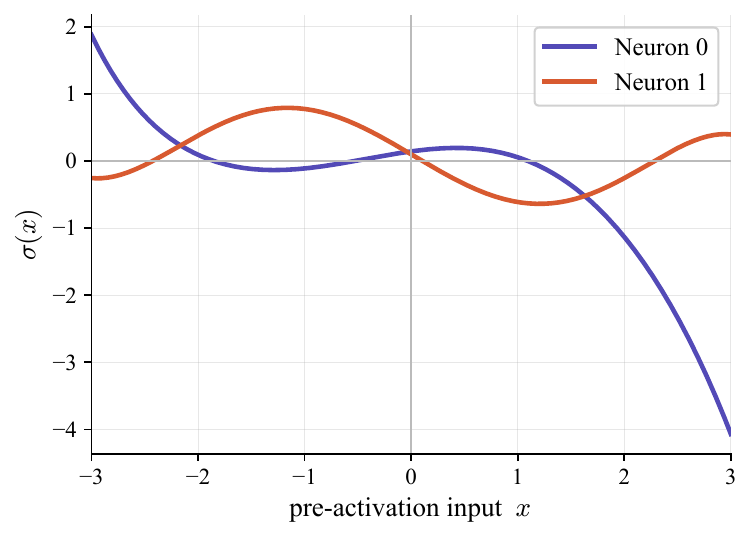}
        \caption{\textbf{Gaussian Sine} $y(x)=e^{-x^2}\sin(5x)$ --- DBN ($W{=}2$, $n{=}7$, MSE~$=4.14{\times}10^{-3}$) outperforms ReLU ($W{=}4$, MSE~$=5.58{\times}10^{-2}$) by over an order of magnitude; the ReLU network largely collapses to a near-constant prediction. The two learned activations develop complementary oscillatory shapes to jointly reconstruct the target's modulated waveform.}
        \label{fig:row_gauss_sine}
    \end{subfigure}

    \caption{\textbf{Regression tasks: function fit, pointwise residual, and learned Bernstein activations.}
    Each row shows the DBN and ReLU function fit (left), pointwise residual (center), and the learned Bernstein activation $\sigma(x)$ of the DBN hidden neurons (right).
    In both tasks the DBN achieves substantially lower error with fewer neurons, and the learned activations reveal how the network adapts its nonlinearity to match the target's structure.}
    \label{fig:regression}
\end{figure}

\section{Additional Parameter Efficiency Results (Experiment 2)}
\label{app:bucket_tables}

This appendix presents the complete parameter efficiency results for the six dataset--metric--residual combinations not included in the main text. The methodology is identical to that described in \S\ref{sec:exp_compression}: for each ReLU architecture ($L{\times}W$), we select the best-performing ReLU across seeds, identify all DBN models with a strictly better metric, and report the most parameter-efficient among them.

\subsection{HIGGS without residual connections}

Tables~\ref{tab:higgs_nores_auc_bucket_report} and~\ref{tab:higgs_nores_loss_bucket_report} present the results for HIGGS without residual connections, limited to $L \in \{5, 15\}$. The same scaling pattern from the main text holds: negative efficiency at $5{\times}16$ ($-20.2\%$) transitioning to positive values at deeper or wider configurations. At $15{\times}256$, efficiency reaches $92.2\%$ for AUC and $68.6\%$ for loss. The smaller depth range means the maximum efficiencies are lower than in the residual setting, but the monotonic increase with both $L$ and $W$ is clearly visible. For loss (Table~\ref{tab:higgs_nores_loss_bucket_report}), the best qualifying DBN at $15{\times}256$ achieves a loss of $0.3876$, compared to the ReLU baseline's $0.3955$---a meaningful improvement despite using $68.6\%$ fewer parameters.

\begin{table*}[t]
\centering
\caption{\textbf{Parameter efficiency on HIGGS (no residual connections, validation AUC).} Limited to $L \in \{5, 15\}$ due to vanishing gradients in deeper plain networks. Efficiency ranges from $-20.2\%$ at $5{\times}16$ to $92.2\%$ at $15{\times}256$. See Table~\ref{tab:higgs_residual_auc_bucket_report} for column definitions.}
\label{tab:higgs_nores_auc_bucket_report}
\resizebox{\textwidth}{!}{%
\setlength{\tabcolsep}{5pt}
\begin{tabular}{l r r r
    >{\centering\arraybackslash}p{2.2em}
    >{\centering\arraybackslash}p{2.2em}
    >{\centering\arraybackslash}p{2.2em}
    >{\centering\arraybackslash}p{2.2em} l r r}
\toprule
\multirow{2}{*}{\shortstack{Arch\\($L\times W$)}}  & \multicolumn{2}{c}{ReLU baseline}  & \multirow{2}{*}{\shortstack{Total\\qual.}}  & \multicolumn{4}{c}{Qualifying count by DBN degree}  & \multicolumn{2}{c}{Best DBN}  & \multirow{2}{*}{\shortstack{Param\\effic.}} \\
\cmidrule(lr){2-3} \cmidrule(lr){4-7} \cmidrule(lr){8-9}
  & \# params & auc  & & $n{=}3$ & $n{=}9$ & $n{=}15$ & $n{=}25$  & \# params (arch.,~$n$) & auc  & \\
\midrule
  $5\times16$ & 1,586 & 0.8212 & 96 & 24 & 24 & 24 & 24 & 1,906~~($5\times16$,~$n{=}3$) & 0.8300 & $-20.2\%$ \\
  $5\times32$ & 5,218 & 0.8403 & 72 & 18 & 18 & 18 & 18 & 5,858~~($5\times32$,~$n{=}3$) & 0.8486 & $-12.3\%$ \\
  $5\times128$ & 70,018 & 0.8654 & 48 & 12 & 12 & 12 & 12 & 72,578~~($5\times128$,~$n{=}3$) & 0.8783 & $-3.7\%$ \\
  $5\times256$ & 271,106 & 0.8711 & 43 & 7 & 12 & 12 & 12 & 72,578~~($5\times128$,~$n{=}3$) & 0.8783 & $73.2\%$ \\
\midrule
  $15\times16$ & 4,306 & 0.8148 & 96 & 24 & 24 & 24 & 24 & 1,906~~($5\times16$,~$n{=}3$) & 0.8300 & $55.7\%$ \\
  $15\times32$ & 15,778 & 0.8393 & 73 & 19 & 18 & 18 & 18 & 5,266~~($15\times16$,~$n{=}3$) & 0.8402 & $66.6\%$ \\
  $15\times128$ & 235,138 & 0.8675 & 48 & 12 & 12 & 12 & 12 & 72,578~~($5\times128$,~$n{=}3$) & 0.8783 & $69.1\%$ \\
  $15\times256$ & 929,026 & 0.8709 & 44 & 8 & 12 & 12 & 12 & 72,578~~($5\times128$,~$n{=}3$) & 0.8783 & $92.2\%$ \\
\bottomrule
\end{tabular}%
}
\end{table*}

\begin{table*}[t]
\centering
\caption{\textbf{Parameter efficiency on HIGGS (no residual connections, training loss).} Efficiency ranges from $-20.2\%$ at $5{\times}16$ to $68.6\%$ at $15{\times}256$. At $5{\times}32$, the efficiency is near zero ($-0.9\%$), indicating the crossover point. See Table~\ref{tab:higgs_residual_auc_bucket_report} for column definitions.}
\label{tab:higgs_nores_loss_bucket_report}
\resizebox{\textwidth}{!}{%
\setlength{\tabcolsep}{5pt}
\begin{tabular}{l r r r
    >{\centering\arraybackslash}p{2.2em}
    >{\centering\arraybackslash}p{2.2em}
    >{\centering\arraybackslash}p{2.2em}
    >{\centering\arraybackslash}p{2.2em} l r r}
\toprule
\multirow{2}{*}{\shortstack{Arch\\($L\times W$)}}  & \multicolumn{2}{c}{ReLU baseline}  & \multirow{2}{*}{\shortstack{Total\\qual.}}  & \multicolumn{4}{c}{Qualifying count by DBN degree}  & \multicolumn{2}{c}{Best DBN}  & \multirow{2}{*}{\shortstack{Param\\effic.}} \\
\cmidrule(lr){2-3} \cmidrule(lr){4-7} \cmidrule(lr){8-9}
  & \# params & loss  & & $n{=}3$ & $n{=}9$ & $n{=}15$ & $n{=}25$  & \# params (arch.,~$n$) & loss  & \\
\midrule
  $5\times16$ & 1,586 & 0.5132 & 96 & 24 & 24 & 24 & 24 & 1,906~~($5\times16$,~$n{=}3$) & 0.5018 & $-20.2\%$ \\
  $5\times32$ & 5,218 & 0.4887 & 73 & 19 & 18 & 18 & 18 & 5,266~~($15\times16$,~$n{=}3$) & 0.4885 & $-0.9\%$ \\
  $5\times128$ & 70,018 & 0.4443 & 48 & 12 & 12 & 12 & 12 & 72,578~~($5\times128$,~$n{=}3$) & 0.4243 & $-3.7\%$ \\
  $5\times256$ & 271,106 & 0.4105 & 26 & 5 & 7 & 6 & 8 & 254,338~~($15\times128$,~$n{=}9$) & 0.4048 & $6.2\%$ \\
\midrule
  $15\times16$ & 4,306 & 0.5207 & 96 & 24 & 24 & 24 & 24 & 1,906~~($5\times16$,~$n{=}3$) & 0.5018 & $55.7\%$ \\
  $15\times32$ & 15,778 & 0.4887 & 73 & 19 & 18 & 18 & 18 & 5,266~~($15\times16$,~$n{=}3$) & 0.4885 & $66.6\%$ \\
  $15\times128$ & 235,138 & 0.4345 & 48 & 12 & 12 & 12 & 12 & 72,578~~($5\times128$,~$n{=}3$) & 0.4243 & $69.1\%$ \\
  $15\times256$ & 929,026 & 0.3955 & 14 & 3 & 2 & 4 & 5 & 291,586~~($5\times256$,~$n{=}15$) & 0.3876 & $68.6\%$ \\
\bottomrule
\end{tabular}%
}
\end{table*}

\subsection{HIGGS with residual connections --- training loss}

Table~\ref{tab:higgs_residual_loss_bucket_report} presents the loss-based analysis for HIGGS with residual connections. The pattern mirrors the AUC results from the main text but with an important nuance: at the widest architectures within each depth group, the efficiency for loss is somewhat lower than for AUC. For instance, at $15{\times}256$ the efficiency is only $5.6\%$ for loss versus $92.2\%$ for AUC, because the best ReLU at this architecture achieves a strong loss of $0.3862$ that is harder to beat with a smaller model. Nevertheless, at $L \geq 50$, efficiencies consistently exceed $67\%$, and at $L{=}150$ they range from $89\%$ to $95\%$ across all widths.

\begin{table*}[t]
\centering
\caption{\textbf{Parameter efficiency on HIGGS (residual connections, training loss).} Efficiencies range from $-20.2\%$ at $5{\times}16$ to $95.4\%$ at $150{\times}16$. The loss metric shows wider variation than AUC, with some large-width architectures showing modest efficiency at intermediate depths. See Table~\ref{tab:higgs_residual_auc_bucket_report} for column definitions.}
\label{tab:higgs_residual_loss_bucket_report}
\resizebox{\textwidth}{!}{%
\setlength{\tabcolsep}{5pt}
\begin{tabular}{l r r r
    >{\centering\arraybackslash}p{2.2em}
    >{\centering\arraybackslash}p{2.2em}
    >{\centering\arraybackslash}p{2.2em}
    >{\centering\arraybackslash}p{2.2em} l r r}
\toprule
\multirow{2}{*}{\shortstack{Arch\\($L\times W$)}}  & \multicolumn{2}{c}{ReLU baseline}  & \multirow{2}{*}{\shortstack{Total\\qual.}}  & \multicolumn{4}{c}{Qualifying count by DBN degree}  & \multicolumn{2}{c}{Best DBN}  & \multirow{2}{*}{\shortstack{Param\\effic.}} \\
\cmidrule(lr){2-3} \cmidrule(lr){4-7} \cmidrule(lr){8-9}
  & \# params & loss  & & $n{=}3$ & $n{=}9$ & $n{=}15$ & $n{=}25$  & \# params (arch.,~$n$) & loss  & \\
\midrule
  $5\times16$ & 1,586 & 0.5135 & 240 & 60 & 60 & 60 & 60 & 1,906~~($5\times16$,~$n{=}3$) & 0.5032 & $-20.2\%$ \\
  $5\times32$ & 5,218 & 0.4896 & 222 & 54 & 57 & 57 & 54 & 5,858~~($5\times32$,~$n{=}3$) & 0.4814 & $-12.3\%$ \\
  $5\times128$ & 70,018 & 0.4479 & 133 & 32 & 34 & 34 & 33 & 72,578~~($5\times128$,~$n{=}3$) & 0.4337 & $-3.7\%$ \\
  $5\times256$ & 271,106 & 0.4182 & 106 & 23 & 26 & 28 & 29 & 80,258~~($5\times128$,~$n{=}15$) & 0.4113 & $70.4\%$ \\
\midrule
  $15\times16$ & 4,306 & 0.5120 & 240 & 60 & 60 & 60 & 60 & 1,906~~($5\times16$,~$n{=}3$) & 0.5032 & $55.7\%$ \\
  $15\times32$ & 15,778 & 0.4809 & 202 & 49 & 53 & 53 & 47 & 6,818~~($5\times32$,~$n{=}9$) & 0.4747 & $56.8\%$ \\
  $15\times128$ & 235,138 & 0.4220 & 114 & 26 & 28 & 30 & 30 & 76,418~~($5\times128$,~$n{=}9$) & 0.4203 & $67.5\%$ \\
  $15\times256$ & 929,026 & 0.3862 & 75 & 13 & 20 & 21 & 21 & 877,058~~($50\times128$,~$n{=}9$) & 0.3707 & $5.6\%$ \\
\midrule
  $50\times16$ & 13,826 & 0.5059 & 238 & 59 & 60 & 60 & 59 & 1,906~~($5\times16$,~$n{=}3$) & 0.5032 & $86.2\%$ \\
  $50\times32$ & 52,738 & 0.4785 & 192 & 46 & 51 & 50 & 45 & 6,818~~($5\times32$,~$n{=}9$) & 0.4747 & $87.1\%$ \\
  $50\times128$ & 813,058 & 0.4098 & 94 & 20 & 22 & 25 & 27 & 265,858~~($15\times128$,~$n{=}15$) & 0.4055 & $67.3\%$ \\
  $50\times256$ & 3,231,746 & 0.3696 & 62 & 9 & 14 & 20 & 19 & 915,458~~($50\times128$,~$n{=}15$) & 0.3590 & $71.7\%$ \\
\midrule
  $100\times16$ & 27,426 & 0.5058 & 238 & 59 & 60 & 60 & 59 & 1,906~~($5\times16$,~$n{=}3$) & 0.5032 & $93.1\%$ \\
  $100\times32$ & 105,538 & 0.4720 & 170 & 42 & 42 & 43 & 43 & 9,378~~($5\times32$,~$n{=}25$) & 0.4704 & $91.1\%$ \\
  $100\times128$ & 1,638,658 & 0.3928 & 79 & 14 & 21 & 21 & 23 & 304,386~~($5\times256$,~$n{=}25$) & 0.3886 & $81.4\%$ \\
  $100\times256$ & 6,521,346 & 0.3620 & 57 & 8 & 11 & 19 & 19 & 915,458~~($50\times128$,~$n{=}15$) & 0.3590 & $86.0\%$ \\
\midrule
  $150\times16$ & 41,026 & 0.5055 & 238 & 59 & 60 & 60 & 59 & 1,906~~($5\times16$,~$n{=}3$) & 0.5032 & $95.4\%$ \\
  $150\times32$ & 158,338 & 0.4719 & 170 & 42 & 42 & 43 & 43 & 9,378~~($5\times32$,~$n{=}25$) & 0.4704 & $94.1\%$ \\
  $150\times128$ & 2,464,258 & 0.4098 & 94 & 20 & 22 & 25 & 27 & 265,858~~($15\times128$,~$n{=}15$) & 0.4055 & $89.2\%$ \\
  $150\times256$ & 9,810,946 & 0.3690 & 62 & 9 & 14 & 20 & 19 & 915,458~~($50\times128$,~$n{=}15$) & 0.3590 & $90.7\%$ \\
\bottomrule
\end{tabular}%
}
\end{table*}

\subsection{SUSY without residual connections}

Tables~\ref{tab:susy_nores_auc_bucket_report} and~\ref{tab:susy_nores_loss_bucket_report} present results for SUSY without residual connections. Consistent with the main-text findings, the efficiencies on SUSY are higher than on HIGGS at comparable architectures. For AUC, the efficiency reaches $99.7\%$ at $15{\times}256$, with a $2{,}706$-parameter DBN ($5{\times}16$, $n{=}15$) matching a $926$K-parameter ReLU. Even at $5{\times}128$, the efficiency is already $96.1\%$. For loss (Table~\ref{tab:susy_nores_loss_bucket_report}), the pattern is slightly more variable: the efficiency at $5{\times}128$ is $-9.3\%$ (the best qualifying DBN requires more parameters), but it rises to $91.9\%$ at $15{\times}256$. Notably, the loss table shows that $97.6\%$ efficiency is achievable on SUSY at $15{\times}128$ for the loss metric (from the residual setting in Table~\ref{tab:susy_residual_loss_bucket_report}), but here without residuals the best qualifying DBN at $15{\times}128$ achieves $99.0\%$ for AUC, illustrating that the two metrics can yield somewhat different efficiency profiles.

\begin{table*}[t]
\centering
\caption{\textbf{Parameter efficiency on SUSY (no residual connections, validation AUC).} Limited to $L \in \{5, 15\}$. Efficiencies reach $99.7\%$ at $15{\times}256$, with only the $5{\times}16$ architecture showing negative efficiency ($-22.4\%$). See Table~\ref{tab:higgs_residual_auc_bucket_report} for column definitions.}
\label{tab:susy_nores_auc_bucket_report}
\resizebox{\textwidth}{!}{%
\setlength{\tabcolsep}{5pt}
\begin{tabular}{l r r r
    >{\centering\arraybackslash}p{2.2em}
    >{\centering\arraybackslash}p{2.2em}
    >{\centering\arraybackslash}p{2.2em}
    >{\centering\arraybackslash}p{2.2em} l r r}
\toprule
\multirow{2}{*}{\shortstack{Arch\\($L\times W$)}}  & \multicolumn{2}{c}{ReLU baseline}  & \multirow{2}{*}{\shortstack{Total\\qual.}}  & \multicolumn{4}{c}{Qualifying count by DBN degree}  & \multicolumn{2}{c}{Best DBN}  & \multirow{2}{*}{\shortstack{Param\\effic.}} \\
\cmidrule(lr){2-3} \cmidrule(lr){4-7} \cmidrule(lr){8-9}
  & \# params & auc  & & $n{=}3$ & $n{=}9$ & $n{=}15$ & $n{=}25$  & \# params (arch.,~$n$) & auc  & \\
\midrule
  $5\times16$ & 1,426 & 0.8759 & 93 & 21 & 24 & 24 & 24 & 1,746~~($5\times16$,~$n{=}3$) & 0.8767 & $-22.4\%$ \\
  $5\times32$ & 4,898 & 0.8767 & 81 & 18 & 22 & 22 & 19 & 1,746~~($5\times16$,~$n{=}3$) & 0.8767 & $64.4\%$ \\
  $5\times128$ & 68,738 & 0.8773 & 50 & 15 & 14 & 15 & 6 & 2,706~~($5\times16$,~$n{=}15$) & 0.8775 & $96.1\%$ \\
  $5\times256$ & 268,546 & 0.8775 & 40 & 12 & 12 & 12 & 4 & 2,706~~($5\times16$,~$n{=}15$) & 0.8775 & $99.0\%$ \\
\midrule
  $15\times16$ & 4,146 & 0.8763 & 91 & 19 & 24 & 24 & 24 & 1,746~~($5\times16$,~$n{=}3$) & 0.8767 & $57.9\%$ \\
  $15\times32$ & 15,458 & 0.8773 & 50 & 15 & 14 & 15 & 6 & 2,706~~($5\times16$,~$n{=}15$) & 0.8775 & $82.5\%$ \\
  $15\times128$ & 233,858 & 0.8768 & 73 & 17 & 22 & 19 & 15 & 2,226~~($5\times16$,~$n{=}9$) & 0.8769 & $99.0\%$ \\
  $15\times256$ & 926,466 & 0.8772 & 55 & 15 & 16 & 17 & 7 & 2,706~~($5\times16$,~$n{=}15$) & 0.8775 & $99.7\%$ \\
\bottomrule
\end{tabular}%
}
\end{table*}

\begin{table*}[t]
\centering
\caption{\textbf{Parameter efficiency on SUSY (no residual connections, training loss).} The $5{\times}128$ architecture shows negative efficiency ($-9.3\%$), indicating that at this scale the best qualifying DBN is slightly larger than the ReLU. Efficiency recovers to $91.9\%$ at $15{\times}256$. See Table~\ref{tab:higgs_residual_auc_bucket_report} for column definitions.}
\label{tab:susy_nores_loss_bucket_report}
\resizebox{\textwidth}{!}{%
\setlength{\tabcolsep}{5pt}
\begin{tabular}{l r r r
    >{\centering\arraybackslash}p{2.2em}
    >{\centering\arraybackslash}p{2.2em}
    >{\centering\arraybackslash}p{2.2em}
    >{\centering\arraybackslash}p{2.2em} l r r}
\toprule
\multirow{2}{*}{\shortstack{Arch\\($L\times W$)}}  & \multicolumn{2}{c}{ReLU baseline}  & \multirow{2}{*}{\shortstack{Total\\qual.}}  & \multicolumn{4}{c}{Qualifying count by DBN degree}  & \multicolumn{2}{c}{Best DBN}  & \multirow{2}{*}{\shortstack{Param\\effic.}} \\
\cmidrule(lr){2-3} \cmidrule(lr){4-7} \cmidrule(lr){8-9}
  & \# params & loss  & & $n{=}3$ & $n{=}9$ & $n{=}15$ & $n{=}25$  & \# params (arch.,~$n$) & loss  & \\
\midrule
  $5\times16$ & 1,426 & 0.4262 & 93 & 22 & 24 & 24 & 23 & 1,746~~($5\times16$,~$n{=}3$) & 0.4251 & $-22.4\%$ \\
  $5\times32$ & 4,898 & 0.4246 & 78 & 18 & 19 & 21 & 20 & 3,506~~($5\times16$,~$n{=}25$) & 0.4244 & $28.4\%$ \\
  $5\times128$ & 68,738 & 0.4215 & 36 & 4 & 10 & 10 & 12 & 75,138~~($5\times128$,~$n{=}9$) & 0.4195 & $-9.3\%$ \\
  $5\times256$ & 268,546 & 0.4187 & 17 & 0 & 3 & 5 & 9 & 85,378~~($5\times128$,~$n{=}25$) & 0.4173 & $68.2\%$ \\
\midrule
  $15\times16$ & 4,146 & 0.4259 & 91 & 21 & 24 & 24 & 22 & 1,746~~($5\times16$,~$n{=}3$) & 0.4251 & $57.9\%$ \\
  $15\times32$ & 15,458 & 0.4241 & 69 & 17 & 18 & 19 & 15 & 5,538~~($5\times32$,~$n{=}3$) & 0.4239 & $64.2\%$ \\
  $15\times128$ & 233,858 & 0.4191 & 19 & 0 & 3 & 6 & 10 & 85,378~~($5\times128$,~$n{=}25$) & 0.4173 & $63.5\%$ \\
  $15\times256$ & 926,466 & 0.4213 & 36 & 4 & 10 & 10 & 12 & 75,138~~($5\times128$,~$n{=}9$) & 0.4195 & $91.9\%$ \\
\bottomrule
\end{tabular}%
}
\end{table*}

\subsection{SUSY with residual connections --- training loss}

Table~\ref{tab:susy_residual_loss_bucket_report} completes the picture for SUSY with the training loss metric. The results are highly consistent with the AUC tables in the main text: efficiency reaches $\mathbf{100.0\%}$ at $150{\times}256$, where a DBN with just $3{,}506$ parameters ($5{\times}16$, $n{=}25$) achieves a lower training loss than a $9.8$M-parameter ReLU. The loss metric shows one notable difference from AUC: at intermediate architectures with large width (e.g., $5{\times}128$), the efficiency can be negative ($-9.3\%$), because the ReLU's loss is already competitive and the smallest qualifying DBN happens to be slightly larger. However, this effect disappears rapidly with depth: by $L{=}50$, all efficiencies exceed $67\%$, and by $L{=}150$, all exceed $94\%$.

\begin{table*}[t]
\centering
\caption{\textbf{Parameter efficiency on SUSY (residual connections, training loss).} Efficiency reaches $\mathbf{100.0\%}$ at $150{\times}256$, matching the AUC result. At $L \geq 100$, efficiencies exceed $83\%$ for all widths. See Table~\ref{tab:higgs_residual_auc_bucket_report} for column definitions.}
\label{tab:susy_residual_loss_bucket_report}
\resizebox{\textwidth}{!}{%
\setlength{\tabcolsep}{5pt}
\begin{tabular}{l r r r
    >{\centering\arraybackslash}p{2.2em}
    >{\centering\arraybackslash}p{2.2em}
    >{\centering\arraybackslash}p{2.2em}
    >{\centering\arraybackslash}p{2.2em} l r r}
\toprule
\multirow{2}{*}{\shortstack{Arch\\($L\times W$)}}  & \multicolumn{2}{c}{ReLU baseline}  & \multirow{2}{*}{\shortstack{Total\\qual.}}  & \multicolumn{4}{c}{Qualifying count by DBN degree}  & \multicolumn{2}{c}{Best DBN}  & \multirow{2}{*}{\shortstack{Param\\effic.}} \\
\cmidrule(lr){2-3} \cmidrule(lr){4-7} \cmidrule(lr){8-9}
  & \# params & loss  & & $n{=}3$ & $n{=}9$ & $n{=}15$ & $n{=}25$  & \# params (arch.,~$n$) & loss  & \\
\midrule
  $5\times16$ & 1,426 & 0.4268 & 239 & 59 & 60 & 60 & 60 & 1,746~~($5\times16$,~$n{=}3$) & 0.4259 & $-22.4\%$ \\
  $5\times32$ & 4,898 & 0.4248 & 216 & 49 & 53 & 57 & 57 & 2,226~~($5\times16$,~$n{=}9$) & 0.4246 & $54.6\%$ \\
  $5\times128$ & 68,738 & 0.4221 & 114 & 20 & 30 & 33 & 31 & 75,138~~($5\times128$,~$n{=}9$) & 0.4205 & $-9.3\%$ \\
  $5\times256$ & 268,546 & 0.4222 & 121 & 25 & 30 & 34 & 32 & 75,138~~($5\times128$,~$n{=}9$) & 0.4205 & $72.0\%$ \\
\midrule
  $15\times16$ & 4,146 & 0.4264 & 239 & 59 & 60 & 60 & 60 & 1,746~~($5\times16$,~$n{=}3$) & 0.4259 & $57.9\%$ \\
  $15\times32$ & 15,458 & 0.4245 & 209 & 47 & 51 & 57 & 54 & 2,706~~($5\times16$,~$n{=}15$) & 0.4244 & $82.5\%$ \\
  $15\times128$ & 233,858 & 0.4214 & 97 & 13 & 26 & 27 & 31 & 75,138~~($5\times128$,~$n{=}9$) & 0.4205 & $67.9\%$ \\
  $15\times256$ & 926,466 & 0.4212 & 97 & 13 & 26 & 27 & 31 & 75,138~~($5\times128$,~$n{=}9$) & 0.4205 & $91.9\%$ \\
\midrule
  $50\times16$ & 13,666 & 0.4257 & 230 & 53 & 59 & 59 & 59 & 2,226~~($5\times16$,~$n{=}9$) & 0.4246 & $83.7\%$ \\
  $50\times32$ & 52,418 & 0.4253 & 224 & 50 & 57 & 58 & 59 & 2,226~~($5\times16$,~$n{=}9$) & 0.4246 & $95.8\%$ \\
  $50\times128$ & 811,778 & 0.4172 & 48 & 1 & 10 & 17 & 20 & 264,578~~($15\times128$,~$n{=}15$) & 0.4167 & $67.4\%$ \\
  $50\times256$ & 3,229,186 & 0.4178 & 55 & 2 & 11 & 19 & 23 & 264,578~~($15\times128$,~$n{=}15$) & 0.4167 & $91.8\%$ \\
\midrule
  $100\times16$ & 27,266 & 0.4262 & 237 & 57 & 60 & 60 & 60 & 1,746~~($5\times16$,~$n{=}3$) & 0.4259 & $93.6\%$ \\
  $100\times32$ & 105,218 & 0.4238 & 185 & 40 & 48 & 49 & 48 & 6,498~~($5\times32$,~$n{=}9$) & 0.4233 & $93.8\%$ \\
  $100\times128$ & 1,637,378 & 0.4190 & 69 & 5 & 14 & 23 & 27 & 264,578~~($15\times128$,~$n{=}15$) & 0.4167 & $83.8\%$ \\
  $100\times256$ & 6,518,786 & 0.4230 & 145 & 30 & 40 & 37 & 38 & 9,058~~($5\times32$,~$n{=}25$) & 0.4228 & $99.9\%$ \\
\midrule
  $150\times16$ & 40,866 & 0.4253 & 225 & 50 & 57 & 59 & 59 & 2,226~~($5\times16$,~$n{=}9$) & 0.4246 & $94.6\%$ \\
  $150\times32$ & 158,018 & 0.4254 & 226 & 51 & 57 & 59 & 59 & 2,226~~($5\times16$,~$n{=}9$) & 0.4246 & $98.6\%$ \\
  $150\times128$ & 2,462,978 & 0.4222 & 121 & 25 & 30 & 34 & 32 & 75,138~~($5\times128$,~$n{=}9$) & 0.4205 & $96.9\%$ \\
  $150\times256$ & 9,808,386 & 0.4243 & 203 & 44 & 51 & 55 & 53 & 3,506~~($5\times16$,~$n{=}25$) & 0.4240 & $100.0\%$ \\
\bottomrule
\end{tabular}%
}
\end{table*}


\section{Additional Convergence Results (Experiment 3)}
\label{app:convergence}

This appendix presents the convergence analysis for configurations without residual connections. The methodology is identical to \S\ref{sec:exp_convergence}: the crossover epoch is the first epoch at which the DBN's median training loss (over 3 seeds) falls below the ReLU's final median loss, and the loss improvement is measured at the end of training. Total epochs are the median across seeds.

\subsection{HIGGS without residual connections}

Table~\ref{tab:crossover_higgs_nores} shows the crossover analysis for HIGGS without residual connections. The same qualitative patterns from the residual setting hold: crossover epochs decrease with higher degree, and loss improvements increase with both depth and degree. The improvements are somewhat smaller than in the residual setting---the maximum is $7.5\%$ at $5{\times}128$ with $n{=}25$ and $5{\times}128$ with $n{=}15$---reflecting the shallower networks ($L \leq 15$) available without residual connections.

\begin{table}[t]
\centering
\setlength{\tabcolsep}{3pt}
\caption{\textbf{Convergence analysis on HIGGS (no residual connections).} Same methodology as Table~\ref{tab:crossover_higgs_res}. Limited to $L \in \{5, 15\}$. The crossover and scaling patterns are consistent with the residual setting, though with smaller absolute improvements due to the restricted depth range.}
\label{tab:crossover_higgs_nores}
\begin{tabular}{lccccc}
\toprule
\textbf{Arch} & \textbf{ReLU} & \multicolumn{4}{c}{\textbf{DBN}} \\
\cmidrule(lr){3-6}
($L{\times}W$) & (epochs) & $n=3$ & $n=9$ & $n=15$ & $n=25$ \\
\midrule
5$\times$16 & 66 & 18/77 ($\downarrow$2.2\%) & 14/83 ($\downarrow$3.4\%) & 13/69 ($\downarrow$3.3\%) & 17/82 ($\downarrow$2.5\%) \\
5$\times$32 & 69 & 26/91 ($\downarrow$2.3\%) & 15/79 ($\downarrow$3.1\%) & 13/94 ($\downarrow$3.4\%) & 15/90 ($\downarrow$3.5\%) \\
5$\times$128 & 75 & 34/135 ($\downarrow$4.7\%) & 17/108 ($\downarrow$6.4\%) & 15/99 ($\downarrow$7.3\%) & 11/84 ($\downarrow$7.5\%) \\
5$\times$256 & 69 & 59/60 ($\downarrow$0.2\%) & 31/45 ($\downarrow$2.1\%) & 23/50 ($\downarrow$5.1\%) & 19/37 ($\downarrow$5.2\%) \\
\midrule
15$\times$16 & 96 & 11/104 ($\downarrow$6.2\%) & 17/88 ($\downarrow$6.1\%) & 14/79 ($\downarrow$5.7\%) & 15/106 ($\downarrow$4.0\%) \\
15$\times$32 & 74 & 12/91 ($\downarrow$5.1\%) & 11/99 ($\downarrow$4.8\%) & 14/90 ($\downarrow$5.1\%) & 13/80 ($\downarrow$4.4\%) \\
15$\times$128 & 69 & 30/74 ($\downarrow$4.8\%) & 19/78 ($\downarrow$6.8\%) & 18/68 ($\downarrow$5.2\%) & 15/72 ($\downarrow$7.3\%) \\
15$\times$256 & 49 & 39/41 ($\downarrow$1.4\%) & 28/33 ($\downarrow$4.8\%) & 24/30 ($\downarrow$4.7\%) & 21/28 ($\downarrow$4.7\%) \\
\bottomrule
\end{tabular}
\end{table}

\subsection{SUSY without residual connections}

Table~\ref{tab:crossover_susy_nores} shows the results for SUSY without residual connections. The improvements are the smallest across all four settings, consistent with SUSY being the easier task and the non-residual setting limiting depth to $L \leq 15$. Several ``--'' entries appear for $n{=}3$ at larger widths ($5{\times}128$, $5{\times}256$, $15{\times}128$) and one for $n{=}15$ at $15{\times}128$ and $n{=}25$ at $15{\times}32$, indicating that at these architectures, the ReLU baseline is already well-optimized and only higher-degree DBNs can surpass it. The maximum improvement is $4.5\%$ at $15{\times}256$ with $n{=}25$. Despite the modest magnitudes, the degree-scaling pattern persists: higher $n$ consistently achieves earlier crossover and larger improvement.

\begin{table}[t]
\centering
\setlength{\tabcolsep}{3pt}
\caption{\textbf{Convergence analysis on SUSY (no residual connections).} Same methodology as Table~\ref{tab:crossover_higgs_res}. Limited to $L \in \{5, 15\}$. More ``--'' entries appear than in other settings, reflecting the combined effect of shallow depth and SUSY's lower intrinsic difficulty, which limits the margin by which DBNs can surpass ReLU.}
\label{tab:crossover_susy_nores}
\begin{tabular}{lccccc}
\toprule
\textbf{Arch} & \textbf{ReLU} & \multicolumn{4}{c}{\textbf{DBN}} \\
\cmidrule(lr){3-6}
($L{\times}W$) & (epochs) & $n=3$ & $n=9$ & $n=15$ & $n=25$ \\
\midrule
5$\times$16 & 44 & 26/39 ($\downarrow$0.3\%) & 22/39 ($\downarrow$0.3\%) & 19/37 ($\downarrow$0.4\%) & 19/37 ($\downarrow$0.4\%) \\
5$\times$32 & 42 & 26/40 ($\downarrow$0.2\%) & 19/39 ($\downarrow$0.3\%) & 20/34 ($\downarrow$0.3\%) & 22/37 ($\downarrow$0.4\%) \\
5$\times$128 & 40 & --/33 ($\uparrow$0.2\%) & 32/38 ($\downarrow$0.5\%) & 25/26 ($\downarrow$0.2\%) & 22/32 ($\downarrow$1.0\%) \\
5$\times$256 & 39 & --/37 ($\uparrow$0.3\%) & 26/27 ($\downarrow$0.2\%) & 23/29 ($\downarrow$1.9\%) & 19/27 ($\downarrow$3.1\%) \\
\midrule
15$\times$16 & 58 & 38/44 ($\downarrow$0.2\%) & 53/83 ($\downarrow$0.5\%) & 43/70 ($\downarrow$0.4\%) & 34/46 ($\downarrow$0.3\%) \\
15$\times$32 & 54 & 33/48 ($\downarrow$0.1\%) & 27/44 ($\downarrow$0.2\%) & 29/36 ($\downarrow$0.2\%) & --/34 ($\uparrow$0.0\%) \\
15$\times$128 & 41 & --/30 ($\uparrow$0.3\%) & 27/27 ($\downarrow$0.1\%) & --/24 ($\uparrow$0.2\%) & 25/28 ($\downarrow$0.9\%) \\
15$\times$256 & 30 & 30/33 ($\downarrow$0.1\%) & 19/24 ($\downarrow$0.7\%) & 17/25 ($\downarrow$2.0\%) & 14/25 ($\downarrow$4.5\%) \\
\bottomrule
\end{tabular}
\end{table}


\section{Additional Distributional Analysis (Experiment 4)}
\label{app:distributional}

Figures~\ref{fig:auc_histograms_nores} and~\ref{fig:loss_histograms_nores} present the stacked histograms for configurations without residual connections. The same qualitative separation observed in the residual setting (\S\ref{sec:exp_distributional}) holds: DBN variants dominate the high-AUC and low-loss tails, while all four standard activations (ReLU, Leaky ReLU, SELU, GeLU) cluster together. On HIGGS without residual connections (Figure~\ref{fig:higgs_nores_auc_hist_app}), DBNs exclusively occupy all AUC buckets above $0.875$, reaching up to $0.887$. On SUSY (Figure~\ref{fig:susy_nores_auc_hist_app}), the separation is visible but narrower, consistent with both methods approaching SUSY's performance ceiling in the shallow-depth regime ($L \leq 15$). For training loss on HIGGS (Figure~\ref{fig:higgs_nores_loss_hist_app}), the entire region below $0.40$ is DBN-exclusive, while on SUSY (Figure~\ref{fig:susy_nores_loss_hist_app}) losses below $0.415$ are dominated by higher-degree DBNs.

\begin{figure}[t]
    \centering
    \begin{subfigure}[b]{0.48\textwidth}
        \centering
        \includegraphics[width=\textwidth]{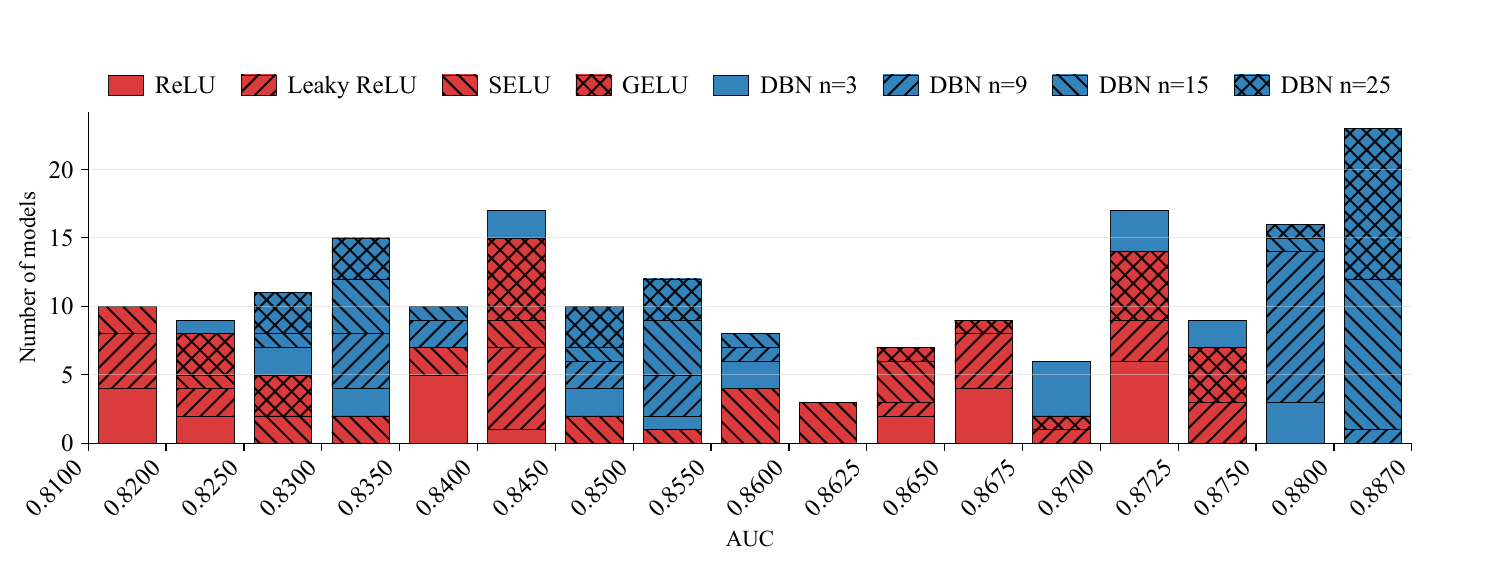}
        \caption{HIGGS, no residual}
        \label{fig:higgs_nores_auc_hist_app}
    \end{subfigure}
    \hfill
    \begin{subfigure}[b]{0.48\textwidth}
        \centering
        \includegraphics[width=\textwidth]{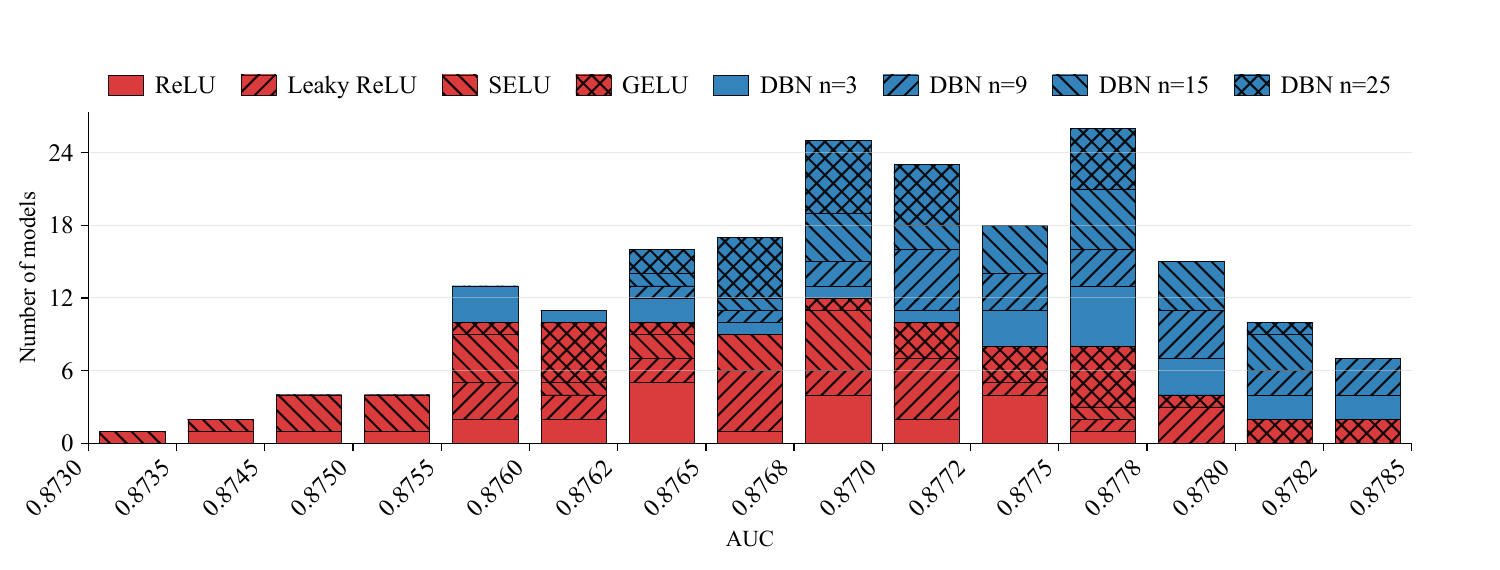}
        \caption{SUSY, no residual}
        \label{fig:susy_nores_auc_hist_app}
    \end{subfigure}
    \caption{\textbf{Distribution of validation AUC (no residual connections).} Same format as Figure~\ref{fig:auc_loss_histograms}. DBNs occupy the high-AUC tail in both datasets. On HIGGS~(a), all buckets above $0.875$ are DBN-exclusive.}
    \label{fig:auc_histograms_nores}
\end{figure}

\begin{figure}[t]
    \centering
    \begin{subfigure}[b]{0.49\textwidth}
        \centering
        \includegraphics[width=\textwidth]{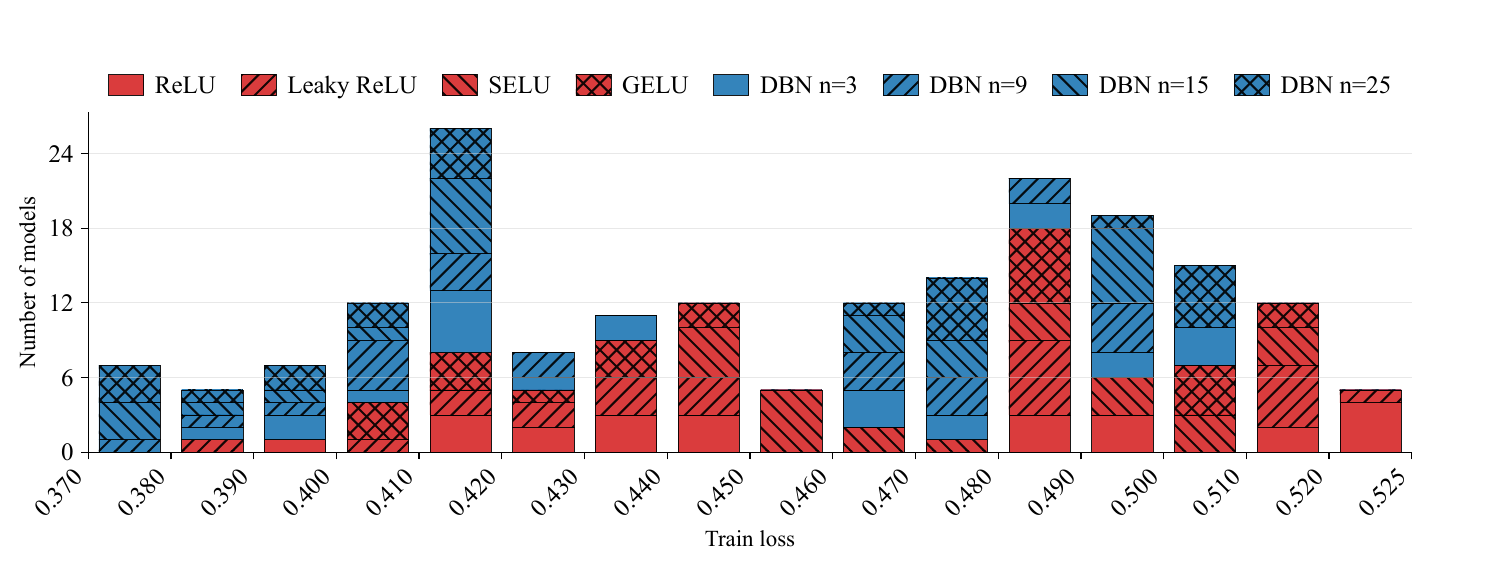}
        \caption{HIGGS, no residual}
        \label{fig:higgs_nores_loss_hist_app}
    \end{subfigure}
    \hfill
    \begin{subfigure}[b]{0.49\textwidth}
        \centering
        \includegraphics[width=\textwidth]{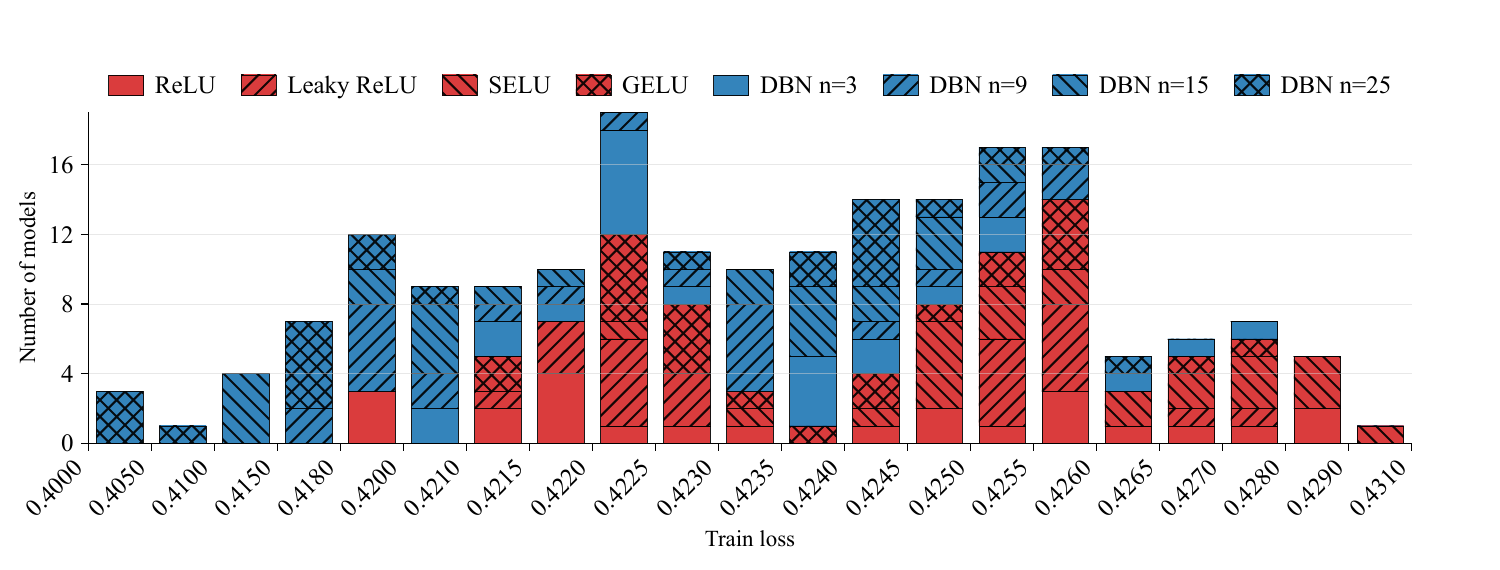}
        \caption{SUSY, no residual}
        \label{fig:susy_nores_loss_hist_app}
    \end{subfigure}
    \caption{\textbf{Distribution of training loss (no residual connections).} Same format as Figure~\ref{fig:auc_loss_histograms}. On HIGGS~(a), the entire loss region below $0.40$ is DBN-exclusive. On SUSY~(b), losses below $0.415$ are dominated by higher-degree DBNs.}
    \label{fig:loss_histograms_nores}
\end{figure}


\section{Wall-Clock Timing and Memory Analysis}
\label{app:timing}

A natural concern with polynomial activations is computational overhead: evaluating a degree-$n$ Bernstein polynomial and its gradient is more expensive per forward/backward pass than a scalar ReLU. To mitigate this, we implement the closed-form Bernstein derivative directly in a custom Triton kernel, bypassing PyTorch's autograd graph for the activation backward pass. This yields substantial speedups and memory savings compared to a na\"ive autograd implementation, since the analytic gradient avoids materializing the full computational graph of the polynomial evaluation.

Tables~\ref{tab:runtime_higgs_res}--\ref{tab:runtime_higgs_nores} report the mean total training runtime (in seconds, averaged over 3 seeds) for all relu and DBN architecture combinations. Several patterns emerge.

\paragraph{Per-epoch overhead is moderate.}
At the smallest architectures ($5{\times}16$), DBN training takes roughly $1.5{\times}$--$2{\times}$ longer than ReLU in total wall-clock time. This reflects the per-epoch overhead of polynomial evaluation, which dominates when the network is small and the number of epochs is comparable. As architectures grow, this ratio narrows: the matrix multiplications in the linear layers increasingly dominate the per-iteration cost, and the polynomial evaluation becomes a smaller fraction of the total.

\paragraph{Faster convergence offsets per-epoch cost.}
Since DBNs converge in fewer epochs (Table~\ref{tab:crossover_higgs_res}), the total wall-clock time is often comparable to or even \emph{lower} than ReLU despite the higher per-epoch cost. On HIGGS with residual connections (Table~\ref{tab:runtime_higgs_res}), several large architectures show DBN training times below ReLU:
\begin{itemize}[nosep,leftmargin=1.5em]
\item $100{\times}128$: ReLU takes $2{,}858$s; DBN $n{=}9$ takes $2{,}427$s ($\mathbf{15\%}$ \textbf{faster}).
\item $100{\times}256$: ReLU takes $5{,}213$s; DBN $n{=}9$ takes $4{,}104$s ($\mathbf{21\%}$ \textbf{faster}).
\item $150{\times}128$: ReLU takes $4{,}057$s; DBN $n{=}9$ takes $3{,}770$s ($7\%$ faster).
\item $150{\times}256$: ReLU takes $7{,}723$s; DBN $n{=}9$ takes $6{,}359$s ($\mathbf{18\%}$ \textbf{faster}).
\end{itemize}
This occurs because these DBN models train for roughly half the epochs of their ReLU counterparts (e.g., $150{\times}256$: DBN $n{=}9$ runs 24 epochs vs.\ ReLU's 50), and the per-epoch overhead factor (${\sim}1.7{\times}$) is more than compensated by the ${\sim}2{\times}$ epoch reduction. Notably, $n{=}9$ and $n{=}3$ tend to have the lowest runtimes among DBN variants since their polynomial evaluation is cheapest.

\paragraph{Memory overhead.}
The Bernstein coefficients and the domain-clamping bookkeeping introduce additional memory. For the largest architecture ($150{\times}256$), peak GPU memory usage is $14.1$\,GB compared to $9.2$\,GB for ReLU---a $1.5{\times}$ overhead. This remains within the capacity of a single modern GPU (e.g., NVIDIA A30 with 24\,GB). The Triton kernel further reduces memory pressure by computing the activation gradient in-place without storing intermediate polynomial terms.

\paragraph{Summary.}
While DBNs have a higher per-epoch cost than ReLU (roughly $1.5$--$2{\times}$), their faster convergence frequently results in comparable or lower \emph{total} training times, particularly at the deeper, wider architectures where the parameter efficiency gains from \S\ref{sec:exp_compression} are largest. The overhead is further mitigated by our Triton-based closed-form gradient implementation.


\begin{table}[t]
\centering
\setlength{\tabcolsep}{5pt}
\caption{\textbf{Total training runtime on HIGGS (residual connections).} Mean wall-clock time in seconds, averaged over 3 seeds. At deeper architectures ($L \geq 100$), several DBN configurations train faster than ReLU in total time despite higher per-epoch cost, because they converge in fewer epochs (cf.\ Table~\ref{tab:crossover_higgs_res}).}
\label{tab:runtime_higgs_res}
\begin{tabular}{lrrrrr}
\toprule
\textbf{Arch} & \textbf{ReLU} & \multicolumn{4}{c}{\textbf{DBN}} \\
\cmidrule(lr){3-6}
($L{\times}W$) & (s) & $n=3$ & $n=9$ & $n=15$ & $n=25$ \\
\midrule
5$\times$16 & 233 & 445 & 405 & 339 & 327 \\
5$\times$32 & 248 & 412 & 463 & 391 & 403 \\
5$\times$128 & 300 & 608 & 552 & 838 & 594 \\
5$\times$256 & 440 & 531 & 391 & 488 & 428 \\
\midrule
15$\times$16 & 567 & 718 & 620 & 826 & 935 \\
15$\times$32 & 558 & 862 & 901 & 735 & 873 \\
15$\times$128 & 491 & 877 & 623 & 932 & 757 \\
15$\times$256 & 636 & 780 & 670 & 1,041 & 782 \\
\midrule
50$\times$16 & 1,764 & 2,636 & 2,916 & 2,792 & 2,921 \\
50$\times$32 & 1,354 & 2,557 & 2,418 & 2,429 & 2,275 \\
50$\times$128 & 1,320 & 1,490 & 1,405 & 1,861 & 1,450 \\
50$\times$256 & 1,981 & 2,143 & 2,093 & 3,041 & 2,480 \\
\midrule
100$\times$16 & 3,178 & 3,924 & 4,368 & 6,603 & 5,677 \\
100$\times$32 & 3,517 & 4,128 & 4,062 & 3,782 & 3,627 \\
100$\times$128 & 2,858 & 2,477 & 2,427 & 3,450 & 2,900 \\
100$\times$256 & 5,213 & 4,607 & 4,104 & 5,970 & 4,880 \\
\midrule
150$\times$16 & 4,725 & 8,090 & 6,276 & 6,069 & 7,564 \\
150$\times$32 & 5,221 & 7,270 & 5,859 & 5,697 & 5,042 \\
150$\times$128 & 4,057 & 4,488 & 3,770 & 5,380 & 4,458 \\
150$\times$256 & 7,723 & 6,616 & 6,359 & 9,358 & 7,587 \\
\bottomrule
\end{tabular}
\end{table}

\begin{table}[t]
\centering
\setlength{\tabcolsep}{5pt}
\caption{\textbf{Total training runtime on SUSY (residual connections).} Mean wall-clock time in seconds, averaged over 3 seeds. The DBN overhead ratio is generally $1.5$--$2{\times}$ at small architectures, narrowing at larger scales as linear-layer computation dominates.}
\label{tab:runtime_susy_res}
\begin{tabular}{lrrrrr}
\toprule
\textbf{Arch} & \textbf{ReLU} & \multicolumn{4}{c}{\textbf{DBN}} \\
\cmidrule(lr){3-6}
($L{\times}W$) & (s) & $n=3$ & $n=9$ & $n=15$ & $n=25$ \\
\midrule
5$\times$16 & 59 & 119 & 114 & 106 & 106 \\
5$\times$32 & 60 & 115 & 108 & 101 & 203 \\
5$\times$128 & 58 & 203 & 175 & 102 & 93 \\
5$\times$256 & 82 & 135 & 139 & 166 & 125 \\
\midrule
15$\times$16 & 113 & 207 & 204 & 223 & 208 \\
15$\times$32 & 137 & 254 & 257 & 168 & 206 \\
15$\times$128 & 124 & 216 & 186 & 253 & 213 \\
15$\times$256 & 236 & 407 & 277 & 464 & 366 \\
\midrule
50$\times$16 & 451 & 678 & 844 & 813 & 680 \\
50$\times$32 & 374 & 636 & 706 & 615 & 547 \\
50$\times$128 & 472 & 564 & 611 & 927 & 694 \\
50$\times$256 & 819 & 1,114 & 1,009 & 1,498 & 1,081 \\
\midrule
100$\times$16 & 950 & 1,110 & 1,113 & 1,213 & 1,371 \\
100$\times$32 & 858 & 1,392 & 1,229 & 1,161 & 964 \\
100$\times$128 & 886 & 1,067 & 1,216 & 1,567 & 1,209 \\
100$\times$256 & 1,787 & 2,474 & 1,975 & 2,884 & 2,591 \\
\midrule
150$\times$16 & 1,666 & 1,977 & 2,162 & 2,024 & 1,804 \\
150$\times$32 & 1,190 & 1,781 & 1,798 & 1,418 & 1,377 \\
150$\times$128 & 1,620 & 1,937 & 1,976 & 2,195 & 1,802 \\
150$\times$256 & 2,335 & 3,718 & 2,893 & 4,206 & 3,289 \\
\bottomrule
\end{tabular}
\end{table}

\begin{table}[t]
\centering
\setlength{\tabcolsep}{5pt}
\caption{\textbf{Total training runtime on SUSY (no residual connections).} Mean wall-clock time in seconds, averaged over 3 seeds. At $5{\times}256$, several DBN configurations ($n{=}3$, $n{=}9$) are faster than ReLU.}
\label{tab:runtime_susy_nores}
\begin{tabular}{lrrrrr}
\toprule
\textbf{Arch} & \textbf{ReLU} & \multicolumn{4}{c}{\textbf{DBN}} \\
\cmidrule(lr){3-6}
($L{\times}W$) & (s) & $n=3$ & $n=9$ & $n=15$ & $n=25$ \\
\midrule
5$\times$16 & 54 & 71 & 78 & 80 & 79 \\
5$\times$32 & 63 & 85 & 78 & 79 & 76 \\
5$\times$128 & 60 & 85 & 83 & 96 & 90 \\
5$\times$256 & 151 & 132 & 115 & 316 & 212 \\
\midrule
15$\times$16 & 262 & 337 & 495 & 482 & 299 \\
15$\times$32 & 207 & 215 & 253 & 179 & 164 \\
15$\times$128 & 173 & 180 & 178 & 236 & 216 \\
15$\times$256 & 209 & 313 & 264 & 464 & 457 \\
\bottomrule
\end{tabular}
\end{table}

\begin{table}[t]
\centering
\setlength{\tabcolsep}{5pt}
\caption{\textbf{Total training runtime on HIGGS (no residual connections).} Mean wall-clock time in seconds, averaged over 3 seeds. The overhead ratio is roughly $1.5$--$2{\times}$ at $L{=}5$, consistent with the residual setting at the same depth.}
\label{tab:runtime_higgs_nores}
\begin{tabular}{lrrrrr}
\toprule
\textbf{Arch} & \textbf{ReLU} & \multicolumn{4}{c}{\textbf{DBN}} \\
\cmidrule(lr){3-6}
($L{\times}W$) & (s) & $n=3$ & $n=9$ & $n=15$ & $n=25$ \\
\midrule
5$\times$16 & 203 & 343 & 347 & 305 & 312 \\
5$\times$32 & 226 & 448 & 555 & 522 & 530 \\
5$\times$128 & 269 & 814 & 581 & 718 & 567 \\
5$\times$256 & 340 & 494 & 414 & 626 & 484 \\
\midrule
15$\times$16 & 475 & 853 & 889 & 779 & 1,072 \\
15$\times$32 & 456 & 921 & 925 & 850 & 841 \\
15$\times$128 & 645 & 1,061 & 924 & 1,202 & 1,116 \\
15$\times$256 & 736 & 1,076 & 800 & 1,166 & 930 \\
\bottomrule
\end{tabular}
\end{table}




\end{document}